\documentclass{article}

\usepackage[final,main]{arxiv_own_style}

\usepackage{graphicx}
\usepackage{wrapfig}
\usepackage{placeins}
\usepackage[english]{babel}
\usepackage{subcaption}
\usepackage{siunitx}
\usepackage{booktabs}
\usepackage{pifont}
\usepackage{floatflt}
\usepackage{tabularx}
\usepackage{listings}
\usepackage{xcolor}
\usepackage{amsmath}
\usepackage{amssymb} 

\usepackage{amsmath,amsfonts,bm}

\def\eqref#1{equation~\ref{#1}}

\def\1{\bm{1}}

\DeclareMathAlphabet{\mathsfit}{\encodingdefault}{\sfdefault}{m}{sl}
\SetMathAlphabet{\mathsfit}{bold}{\encodingdefault}{\sfdefault}{bx}{n}

\DeclareMathOperator{\Tr}{Tr}

\usepackage{natbib}
\usepackage{hyperref}
\usepackage{url}

\definecolor{codegreen}{rgb}{0,0.6,0}
\definecolor{codegray}{rgb}{0.5,0.5,0.5}
\definecolor{codepurple}{rgb}{0.58,0,0.82}
\definecolor{backcolour}{rgb}{0.95,0.95,0.92}

\lstdefinestyle{mystyle}{
  backgroundcolor=\color{backcolour}, commentstyle=\color{codegreen},
  keywordstyle=\color{magenta},
  numberstyle=\tiny\color{codegray},
  stringstyle=\color{codepurple},
  basicstyle=\ttfamily\footnotesize,
  breakatwhitespace=false,         
  breaklines=true,                 
  captionpos=b,                    
  keepspaces=true,                 
  numbers=left,                    
  numbersep=5pt,                  
  showspaces=false,                
  showstringspaces=false,
  showtabs=false,                  
  tabsize=2
}
\usepackage[utf8]{inputenc} 
\usepackage[T1]{fontenc}    
\usepackage{hyperref}       
\usepackage{url}            
\usepackage{booktabs}       
\usepackage{amsfonts}       
\usepackage{nicefrac}       
\usepackage{microtype}      
\usepackage{xcolor}         
\usepackage{amsmath}
\usepackage{amsthm}
\usepackage{amssymb}
\usepackage{todonotes}
\usepackage{multirow}
\usepackage{soul}

\newtheorem{theorem}{Theorem}
\newtheorem{lemma}{Lemma}

\title{Diffeomorphic Optimization}

\author{
  Ludwig Winkler\thanks{Work done during internship at Genentech}\\
  Microsoft Research \\ 
  AI4Science \\
  \And
  Andrew Leaver Fay \\
  Prescient Design, Genentech Roche \\
  \And
  Joseph Kleinhenz \\
   Prescient Design, Genentech Roche  \\
  \And
  Pan Kessel \\
   Prescient Design, Genentech Roche \\
}

\begin{document}

\maketitle

\begin{abstract}

Generative models learn data distributions that reside on a low-dimensional manifold within a higher-dimensional ambient space. 
Optimizing differentiable objectives on this manifold is challenging: the ambient loss landscape is high-dimensional, rugged, and non-convex. 
Direct gradient descent, blind to the manifold's geometry, quickly drifts off it.
Diffeomorphic optimization starts from the observation that diffusion and flow models provide a map from the data manifold to a much simpler base space in which we perform gradient descent.
Using differential geometry, we show this is equivalent to Riemannian gradient descent on the data manifold up to $\mathcal{O}(\lambda^2)$ corrections, keeping trajectories on-manifold by construction and yielding a smoother optimization surface. For protein design, we extend diffeomorphic optimization to the matrix Lie groups $\mathrm{SO}(3)$ and $\mathrm{SE}(3)$, deriving an autograd-compatible $\mathrm{SO}(3)$ gradient and a generalized adjoint-state method for backpropagation through Lie-group ODE solvers. Diffeomorphic optimization improves over tuned guidance on secondary-structure targeting with FrameFlow ($91.3\%$ vs. $63.3\%$ of residues in the Ramachandran target), outperforms OC-Flow on peptide binding affinity at $2\times$ the speed, and reduces Rosetta energies by thousands of units across the PDB test set for structures with hundreds of residues.

\end{abstract}

\section{Introduction}
Machine learning data is typically concentrated on a low-dimensional manifold \citep{manifold1, manifold2, manifold3}.
In practice, these manifolds are not known explicitly. Many objectives we care about, such as the stability of a protein or the  fidelity of a generated image, are in fact defined on low-dimensional data manifolds embedded in high-dimensional spaces.
Direct optimization in this space is difficult: the objective landscape is highly non-convex, the data manifold is implicit, and unconstrained updates, with e.g. direct gradient descent in the data space, tend to produce out-of-distribution solutions.

Diffusion and flow-based generative models provide a means to address this difficulty. These models learn a diffeomorphic map from a tractable base distribution to the data distribution.
This transformation reparameterizes the data manifold: complex, multimodal regions in data space correspond by design  to smoother and more regular regions in the base space of the prior distribution. 
The geometry of the manifolds required for optimization are already implicitly learned by these models.
While generative models have been highly successful in sampling from such manifolds, their use as a foundation for subsequent optimization is less developed.

\begin{figure}[h]
  \centering
  \includegraphics[width=0.8\linewidth]{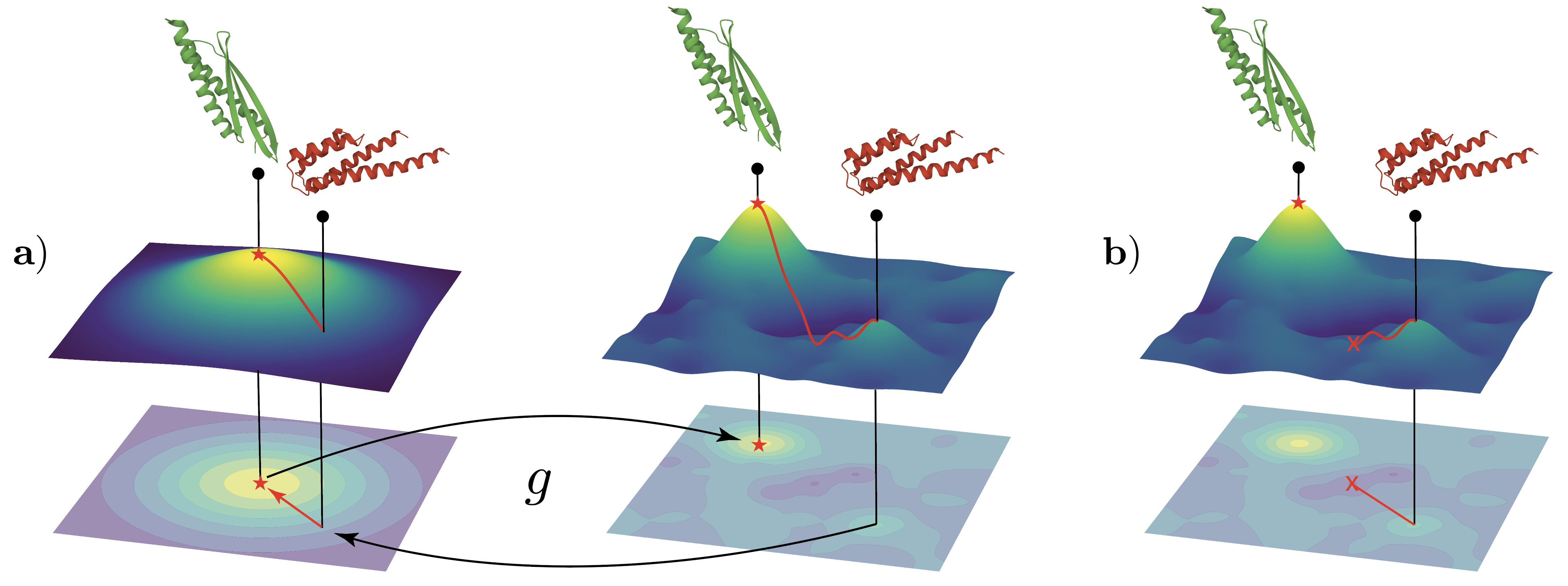}
  \caption{Optimization of protein with undesired properties (red) to a protein of desired properties (green). \textbf{Right:} Direct optimization in target space gets stuck in local minima. \textbf{Left:} Optimization in base space leads to smoother objective that allows for mode-switching.}
  \label{fig:aufmacher}
\end{figure}


In this work, we propose diffeomorphic optimization - a novel method to optimize arbitrary differentiable cost functions on the data manifold. 
Our method harnesses the recent advances in flow matching and diffusion models, is applicable to a wide range of tasks, and scales to high dimensional data, such as proteins with several hundreds of residues.
The proposed framework yields a inference time compute method for frozen, pretrained diffusion and flow models.
For a visual intuition of this idea, we refer to Figure~\ref{fig:aufmacher}.

Existing methods that try to optimize objectives with pretrained models come with caveats: guidance approximates an intractable conditional density \cite{dhariwal2021diffusion}; reward fine tuning requires retraining \cite{clark2023draft}, and optimal control methods inject control into the flow and can pull trajectories off manifold \cite{wang2024training}.
None of these apply to diffeomorphic optimization.

The approach diffeomorphic optimization takes is the following.
Diffusion and flow models learn a diffeomorphic (smooth and invertible) map $g:\mathcal{Z} \to \mathcal{X}$ by integrating the ordinary differential equation $\smash{g(z) \equiv  x_1 = z + \int_0^1 d\tau \, v_\theta(x_\tau, \tau)}$ that maps a base space $\mathcal{Z}$, equipped with a simple probability density $q_{\mathcal{Z}}$, to the target space $\mathcal{X}$, for example, the space of atom positions. 

Consider a cost function $\mathcal{L}: \mathcal{X} \to \mathbb{R}$, e.g. the energy or thermostability of a protein, that we want to optimize on the data manifold $\mathcal{D}$. Since the map $g$ is diffeomorphic, we change coordinates to the base space variables,
\begin{align}
    \mathcal{L}\circ g: \mathcal{Z} \to \mathbb{R}, \;\; z \mapsto \mathcal{L}(g(z)) \,.
\end{align}
We can then perform gradient descent with learning rate $\lambda \in \mathbb{R}$ in the base space variables
\begin{align}
    z^{(i+1)} = z^{(i)} - \lambda \nabla_z \mathcal{L}(g(z^{(i)}))\,,  && \textrm{for} \; i \in \{1, \dots, n\}\,,
\end{align}
and map the final base point $z^{(n)}$ to its corresponding $x^{(n)} = g(z^{(n)})$.
This parameterization has several advantages: the map $g$ is bijective and thus no information is lost. The data distribution is often highly multimodal. However, the density of the base space $q_{\mathcal{Z}}$ is chosen to be a simple unimodal distribution, such as a normal distribution $\mathcal{N}(0,1)$. Thus, the base space reparameterization makes it easier to switch modes leading to a smoother loss landscape. 

The optimization task is complicated by the fact that generative models for proteins often do not sample the atom positions directly but rather the $SE(3)$ backbone frames and $SO(2)$ sidechain dihedral angles \cite{framediff, frameflow, alphaflow, alphafold2}. We will therefore derive two efficient methods to facilitate backpropagation through ODE solvers on matrix Lie groups in Section~\ref{sec:solvers_on_se3}. The first relies on repurposing existing autograd engines for calculating the Riemannian gradient. For the second, we derive a suitable generalization of the adjoint-state method for matrix Lie groups.

We showcase diffeomorphic optimization in several numerical experiments. Specifically, we demonstrate that we can start from a given protein and then controllably change (parts of) its secondary structures using the flow matching model FrameFlow \cite{framediff} as the diffeomorphic map. Using the DiffDock diffusion model \cite{diffdock}, we show that we can optimize the Vina docking score \cite{vina} for a protein-ligand pocket and peptide, and improve the stability and affinity of peptides \cite{wang2024training}.
Furthermore, we demonstrate that we can minimize the Rosetta energy function of a given protein in the base space of the AlphaFlow model \cite{alphaflow}.
This relaxation protocol leads to significantly lower energies than the state-of-the-art Rosetta Relax.

Our work coincides with a recent rise of inference computation for protein design. Practitioners often sample thousands of designs, rank, and submit only a handful for further wetlab experiments \cite{rfab, alphafold3, rfdiffusion, frey2025lab}. 
The reason for this is that experimental wet lab capacity (and not sampling costs) is the main bottleneck. Furthermore, extensive sampling has been shown to lead to pronounced performance improvements, for example, for antibody-antigen coupling \cite{alphafold3}. 
Diffeomorphic optimization provides a more targeted approach to obtain high quality samples compared to brute-force sampling followed by ranking and selection. 
It does not require finetuning (unlike RL or reward finetuning), auxiliary networks (as in guidance) or rejection sampling (best-of-N).
It can be applied to any frozen, pretrained, flow or diffusion model with a differentiable reward.
We therefore believe that it will become an important part of the inference compute toolbox for proteins.

Briefly summarized, our key contributions are the following.
We propose diffeomorphic optimization that is applicable to any differentiable optimization objective. As we show theoretically, it automatically enforces the manifold constraint and leads to a smoother optimization landscape. 
To enable this, we propose two effective methods to facilitate backpropagation through matrix Lie group ODE solvers and will provide efficient PyTorch implementations for it.
Experimentally, we apply diffeomorphic optimization to tasks of high practical relevance, i.e. protein ligand docking, Rosetta energy relaxation, and secondary structure modification, by combining it with state-of-the-art generative models in the field of protein generation, specifically FrameFlow, DiffDock, and AlphaFlow.

\section{Related Works}

Protein hallucination is a version of computational protein design which uses backpropagation through a folding model to its input sequence \cite{anishchenko2021novo,kosugi2022solubility,goverde2023novo,pacesa2024bindcraft, cho2025boltzdesign1}.
Gradient descent and normalizing flows has been explored in the explainability literature to generate counterfactual explanations \cite{joshi2019towards,dombrowski2021diffeomorphic, dombrowski2023diffeomorphic,dhurandhar2018explanations} although not for flow matching and diffusion models. 
\citet{dflow} proposes to differentiate through flows for controlled generation. Our work builds on this reference by generalizing it to matrix Lie groups, which is of high relevance for proteins. We also provide a detailed theoretical analysis of the method and derive an efficient adjoint state method. 
\citet{wang2024training} explores related ideas for matrix groups in the framework of optimal control. Specifically, the authors propose a nice framework which adds an additive control to the vector field of the flow. Our approach does not use a control but rather optimizes the initial condition following \cite{dflow}. We discuss the relationship to this reference in more detail in Appendix~\ref{app:related_work} and compare in detailed numerical experiments to their approach. \citet{liu2023flowgrad} similarly relies on control variables but directly applies gradient descent to them. This reference does however not consider matrix groups.
The adjoint state method on manifolds has been discussed in other works for charts \cite{lou2020neural, mathieu2020riemannian} or particular manifolds \cite{rezende2020normalizing, winkler2024bridging, bacchio2023learning, albergo2022building}.
Guidance is a widely used method to bias diffusion and flow-matching models towards certain desiderata is guidance. There exist various flavors of it, such as classifier-based guidance \cite{dhariwal2021diffusion}, classifier-free guidance \cite{ho2022classifier}, and universal guidance \cite{bansal2023universal}. Further information can be found in the appendix~\ref{app:related_work}.

\section{Diffeomorphic Optimization stays on Manifold} 
\label{sec:restriction_to_manifold}

In this section, we analyze the diffeomorphic optimization procedure theoretically using differential geometry. In particular, we will demonstrate its relation with gradient descent on the manifold. 
\subsection{Basics Concepts of Differential Geometry}
\textbf{Manifolds and coordinates:} a manifold $\mathcal{M}$ is a space that locally takes the form of $\mathbb{R}^D$, similar to the earth which can locally approximated by flat three-dimensional space. More formally, for each point $p \in \mathcal{M}$, there exists a chart $\varphi: U \to \mathbb{R}^D$ where $U$ is an open subset of $\mathcal{M}$ containing the point $p$. In this sense, the manifold is locally described by a Euclidian $D$ space. The pair $(U,\varphi)$ is called coordinate chart and the component functions $x^i$ of $\varphi(p) = (x^1(p), \cdots , x^D(p))$ are called coordinates. 

\textbf{Tangent space:} the tangent space $T_p \mathcal{M}$ contains the velocity vectors $\frac{d}{dt} \gamma(t) |_{t=0}$ of curves $\gamma: \mathbb{R} \to \mathcal{M}$ with $\gamma(0)=p$. It can be shown that the tangent space $T_p\mathcal{M}$ is a $D$-dimensional vector space. Let $(U,\varphi)$ be a coordinate chart on $\mathcal{M}$ with coordinates $x$. We can then define $\varphi \circ \lambda_k(t) = (x^1(p),\dots,x^k(p) + t,\dots,x^D(p))$ with $k \in \{1,\dots,D\}$. This implicitly defines curves $\lambda_k : \mathbb{R} \to \mathcal{M}$ through $p$. We denote the corresponding tangent vectors as $\tfrac{\partial}{\partial x^k} =\tfrac{d}{dt}  \lambda_k(t)|_{t=0}$ and it can be shown that they form a basis of the tangent space $T_p\mathcal{M}$. The differential $dg_p: T_p\mathcal{M} \to T_{f(p)}\mathcal{N}$ for a map $g: \mathcal{M} \to \mathcal{V}$ between two manifolds $\mathcal{M}$ and $\mathcal{V}$ is a linear map between the respective tangent spaces. The curve $\gamma_v:\mathbb{R} \to \mathcal{M}$, corresponding to the vector $v \in T_p \mathcal{M}$, is mapped by the differential $dg_p$ to the curve $g \circ \gamma_v: \mathbb{R} \to \mathcal{N}$. Thus, the differential acts as $dg_p[v] = \frac{d}{dt}g \circ \gamma_v(t)|_{t=0}$.

\textbf{Riemannian metric:} A Riemannian manifold is endowed with an inner product $\langle \cdot, \cdot \rangle_{G_p}: T_p \mathcal{M} \times T_p \mathcal{M} \to \mathbb{R}$ for each tangent space, which allows us to define a notion of the length of tangent vectors. $G$ is also known as the Riemannian metric. We refer to the excellent textbook \cite{lee2018introduction}.

\textbf{Riemannian gradient:} The Riemannian gradient $\textrm{grad}^G_p f \in T_p\mathcal{M}$ at point $p\in \mathcal{M}$ of a function $f: \mathcal{M} \to \mathbb{R}$ is the vector uniquely defined by the relation
\begin{align}
    \forall v \in T_p \mathcal{M}: &&  \langle v, \textrm{grad}^G_p f \rangle_{G_p} = df_p[v] 
\end{align}
and crucially depends on Riemannian metric $G$ of the manifold. In coordinates, the Riemannian gradient is $\widehat{\textrm{grad}_p^G f} =\sum_j G^{ij} \partial_j f$ where $G^{ij}$ is the inverse of the metric tensor $G_{ij} = G(\tfrac{\partial}{\partial x^i},\tfrac{\partial}{\partial x^j})$.

\textbf{Exponential map:} On a Riemannian manifold, we can further define geodesics, i.e. curves $\gamma:I \subset \mathbb{R} \to \mathcal{M}$ that have vanishing acceleration with respect to the covariant derivative induced by the Riemannian connection. For each $p \in \mathcal{M}$ and $v \in T_p\mathcal{M}$, there exists a unique geodesic $\gamma_v$ that satisfies $\gamma_v(0) = p$ and $\frac{d}{dt} \gamma_v(t)|_{t=0} = v$.
The exponential map is then defined by
\begin{align}
    \exp_p: T_p &\mathcal{M} \to \mathcal{M}, \;\; \;\; v \mapsto \gamma_v(1) \,.
\end{align}
Intuitively, the exponential map formalizes the notion of taking a step in the direction of the tangent vector $v$ and then finding the closest point on the manifold.

\textbf{Gradient descent on manifolds:} adding and scaling points on the manifold will not necessarily lead to points on the manifold. In contrast, such operations acting on tangent vectors do lead to other tangent vectors, since a tangent space is a vector space. This suggests a generalization of gradient descent on manifolds: we first scale the gradient of a loss $\mathcal{L}:\mathcal{M} \to \mathbb{R}$, which in particular is a tangent vector, by the learning rate $\lambda \in \mathbb{R}$. To obtain the updated point $p^{i+1}\in \mathcal{M}$, we apply the exponential map at the previous point $p^{i}\in \mathcal{M}$ to the rescaled gradient:
\begin{align}
    p_{i+1} \leftarrow \exp_{p_i}(-\lambda \, \textrm{grad}^G_{p_i} \mathcal{L}) \,.
\end{align}
In $\mathbb{R}^n$, the exponential map is $\exp_p(v) = p + v$ leading to standard gradient descent.

\subsection{Diffeomorphic optimization is equivalent to Gradient Descent on the Data Manifold} 
Let $g: \mathcal{Z} \to \mathcal{D}$ be a generative model that maps its latent space $\mathcal{Z}$ to the data manifold $\mathcal{D}$. Given such a model, we can then perform gradient descent in its latent space and map the resulting point on the data manifold. We then show:

\begin{theorem}\label{th:on_manifold}
Let $g: \mathcal{Z} \to \mathcal{D}$ be a diffeomorphic generative model and $\mathcal{L}: \mathcal{D} \to \mathbb{R}$ is the loss. Then, up to quadratic corrections in the learning rate $\lambda$, performing gradient descent in the latent space $\mathcal{Z}$ and then mapping it go the data space $\mathcal{D}$ with $g$ is equivalent to gradient descent on the data manifold, i.e.,
\begin{align}
    g(\exp_z(- \lambda \, \textrm{grad}^{\tilde{G}}_z \mathcal{L} \circ g ) )= \exp_{g(z)} ( - \lambda \, \textrm{grad}^G_{g(z)} \mathcal{L} + \mathcal{O}(\lambda^2) ) \,,
 \end{align}
where $G(u,v)=\tilde{G}(dg^{-1}u, dg^{-1}v)$ denotes the pushforward of the Riemannian metric $\tilde{G}$ on $\mathcal{Z}$. 
\end{theorem}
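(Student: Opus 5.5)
The plan is to show two things. First, the differential $dg_z$ maps the latent Riemannian gradient exactly onto the data-space Riemannian gradient; this is an exact statement, with no $\lambda$-dependence. Second, $g\circ\exp_z$ and $\exp_{g(z)}\circ dg_z$ agree to first order in the tangent argument, which is where the $\mathcal{O}(\lambda^2)$ term comes from.

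\textbf{Step 1 (gradients correspond).} Put $w=\textrm{grad}^{\tilde G}_z(\mathcal{L}\circ g)\in T_z\mathcal{Z}$. For any $u\in T_{g(z)}\mathcal{D}$, define $v=dg_z^{-1}u$, which exists because $g$ is a diffeomorphism. Using the definition of $G$ as the pushforward of $\tilde G$, the defining relation of the Riemannian gradient, and the chain rule, we get
\begin{align}
\langle u, dg_z w\rangle_{G_{g(z)}} = \langle v, w\rangle_{\tilde G_z} = d(\mathcal{L}\circ g)_z[v] = d\mathcal{L}_{g(z)}[dg_z v] = d\mathcal{L}_{g(z)}[u].
\end{align}
Uniqueness of the Riemannian gradient then gives $dg_z\,\textrm{grad}^{\tilde G}_z(\mathcal{L}\circ g)=\textrm{grad}^G_{g(z)}\mathcal{L}$. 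The same identity can be checked in coordinates via $G^{ij}$ and the Jacobian of $g$; the two computations agree.

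\textbf{Step 2 (first-order expansion).} Consider the curve $c(\lambda)=g(\exp_z(-\lambda w))$ in $\mathcal{D}$. Since $\gamma(\lambda)=\exp_z(-\lambda w)$ satisfies $\gamma(0)=z$ and $\gamma'(0)=-w$, the curve satisfies $c(0)=g(z)$ and $c'(0)=-dg_z w=-\textrm{grad}^G_{g(z)}\mathcal{L}$ by Step 1.

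For small $\lambda$, $c(\lambda)$ lies in a normal neighbourhood of $g(z)$, so it can be written as $c(\lambda)=\exp_{g(z)}(\xi(\lambda))$ with $\xi(\lambda)=\exp_{g(z)}^{-1}(c(\lambda))$. This $\xi$ is smooth in $\lambda$, and $\xi(0)=0$. Because $d(\exp_{g(z)})_0=\mathrm{id}$, we have $\xi'(0)=c'(0)$. Taylor expansion then gives $\xi(\lambda)=-\lambda\,\textrm{grad}^G_{g(z)}\mathcal{L}+\mathcal{O}(\lambda^2)$, which is the claimed identity.

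A further remark: since $G$ is the pushforward of $\tilde G$, the map $g$ is an isometry $(\mathcal{Z},\tilde G)\to(\mathcal{D},G)$. Isometries map geodesics to geodesics, so in fact $g\circ\exp_z=\exp_{g(z)}\circ dg_z$ exactly, and the correction term vanishes. I would mention this, but I would keep the first-order argument as the main proof, since it matches the statement and does not rely on naturality of the Levi-Civita connection.

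\textbf{Main obstacle.} The only delicate point is justifying that $\exp_{g(z)}^{-1}$ is defined and smooth near $g(z)$, so that writing the error as $\mathcal{O}(\lambda^2)$ inside the exponential is meaningful. I would handle this with the standard fact that $\exp_p$ is a local diffeomorphism near $0$; this is where the restriction to sufficiently small $\lambda$ is used.
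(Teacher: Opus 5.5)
Your proposal is correct and follows the paper's route: Step~1 is Lemma~\ref{lemma2}, which the paper verifies in coordinates rather than through uniqueness of the gradient. Step~2 is Lemma~\ref{lemma1}, where your $\xi(\lambda)=\exp_{g(z)}^{-1}(c(\lambda))$ is exactly the normal-coordinate representation the paper Taylor-expands. Your isometry remark is also right: with $\exp_{g(z)}$ taken for $G$ itself the correction vanishes identically. Since your first-order argument uses only $d(\exp_{g(z)})_0=\mathrm{id}$, it also covers an arbitrary metric on $\mathcal{D}$, so keeping it as the main proof is sensible.
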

\begin{proof}
    See Appendix~\ref{app:proof_on_manifold}.
\end{proof}
In coordinates, the gradient in data space is given by
\begin{align}
    \widehat{\textrm{grad}}_{g(\hat{z})} \mathcal{L} = J_g \, \widehat{\textrm{grad}}_z \mathcal{L} \,, && 
    \textrm{with} \; J_g = \frac{\partial \hat{g}}{\partial \hat{z}} \,.
\end{align}
We can perform a SVD decomposition of the Jacobian $J_g$. In the basis of its left singular values, the learning rate will be scaled by the corresponding singular vectors. The data manifold $\mathcal{D}$ is heavily concentrated and thus we expect low values for the singular vectors along these contracted directions. This gives us a mechanism to extract the implicitly learned tangent space from the generative diffusion or flow-matching model, which is (approximately) spanned by the remaining left-singular vectors.

\section{Backpropagation through $\mathbf{SE(3)}$ ODE solvers} \label{sec:solvers_on_se3}


The frame representation of proteins considers an idealized backbone geometry of its heavy atoms $[N, C_\alpha, C, O] \in \mathbb{R}^{3,4}$ determined by fixed bond lengths and angles \cite{alphafold2, framediff}. A protein consists of multiple residues and for each residue $i$, the main backbone atoms can be described by a simple rotation $R_i \in SO(3)$ and translation $t_i \in \mathbb{R}^3$ of these idealized backbone coordinates, as seen in Figure~\ref{fig:se3},
\begin{wrapfigure}{R}{0.3\textwidth}
\centering
\includegraphics[width=0.3\textwidth]{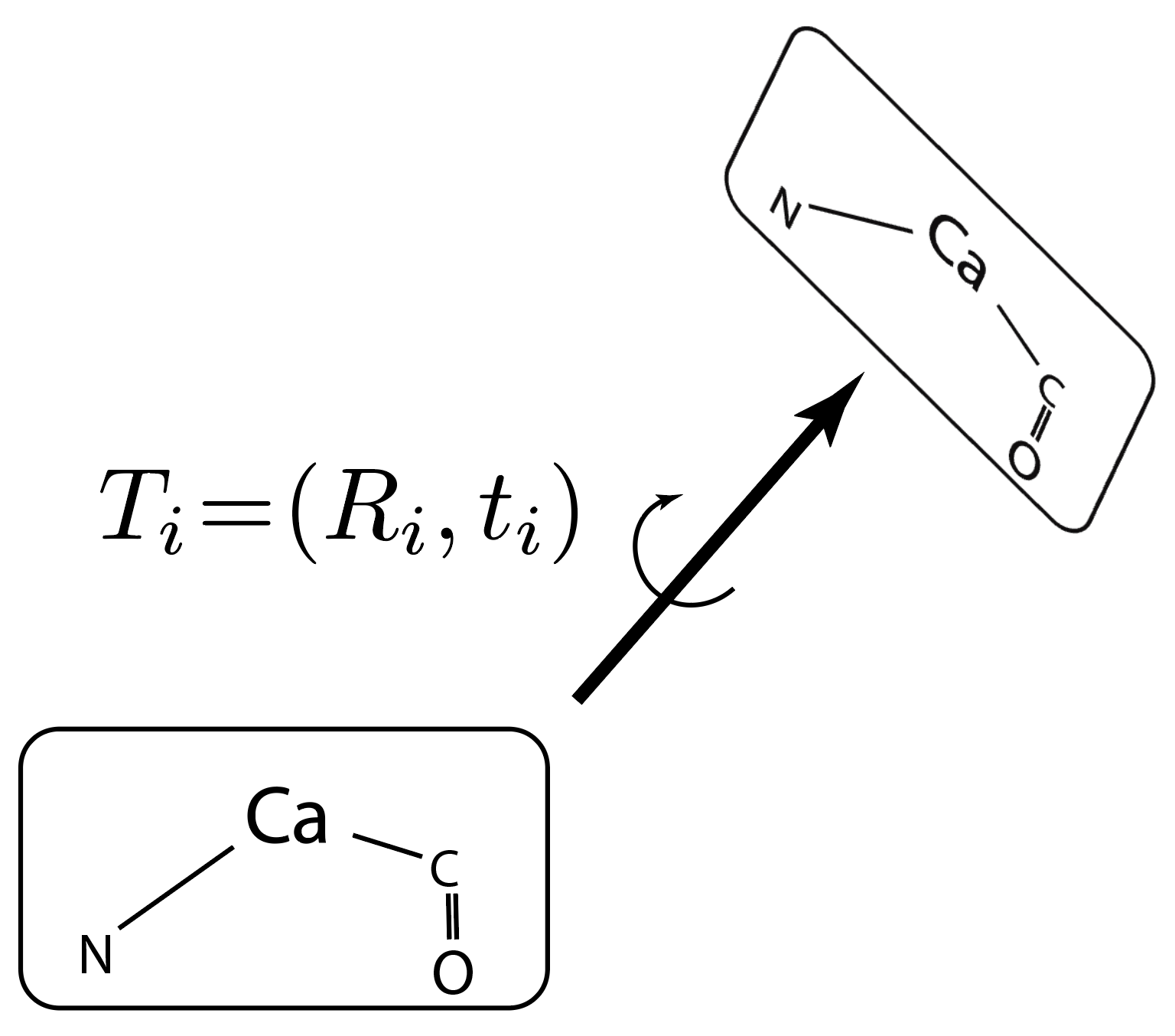}
\caption{\label{fig:se3} $SE(3)$ roto-translation of the idealized backbone positions.}
\end{wrapfigure}
\begin{align}
    [N_i, C_i,, (C_\alpha)_i] = T_i [N, C, C_\alpha]
\end{align}

where $T_i = (R_i, t_i)$ is an element of the three-dimensional special Euclidean group $SE(3) = SO(3) \ltimes \mathbb{R}^3$. The positions of the remaining heavy atoms (backbone oxygen and sidechain carbons) can be fixed by dihedral angles. Therefore, the vector field $v_\theta$ in the ODE takes value in the tangent space of the corresponding Lie groups $SE(3)$ and $SO(2)$. Integration on $SO(2)$ can simply be performed by standard numerical integration in $\mathbb{R}^3$ and then wrapping the result on the circle $S^1$. The case of $SE(3)$ is, however, highly non-trival. For example, the group $SE(3)$ has the product rule
\begin{align}
    T_1 T_2 = (R_1, t_1) (R_2, t_2) = (R_1 R_2, R_1 t_2 + t_1) \,.
\end{align}
The translation group $\mathbb{R}^3\simeq \{ (I,t) | t \in \mathbb{R}^3\}$, where $I$ is the unit element of $SO(3)$, is an abelian normal subgroup of $SE(3)$ since $(R,t) (I, t') (R,t)^{-1} = (I, R t')$. Therefore $SE(3)$ is not semi-simple and there is no canonical left- and right-invariant Riemannian metric induced by the Killing form. In the generative model literature, one typically chooses the Riemannian metric to be the sum of the metrics of $SO(3)$ and $\mathbb{R}^3$ \cite{framediff}. This choice is right-invariant but not left-invariant\footnote{The question of whether the metric is left- or right-invariant is a matter of convention. We choose the Riemannian metric to be induced by push-forward of the Killing form with respect to the right multiplication.}. For this choice, the exponential maps and Riemannian gradients for the translations and rotations decouple
\begin{align}
    \exp_{T}((\omega, v) = (\exp_{R}(\omega), \exp_{t}(v) ) \,, && \textrm{grad}_T f = (\textrm{grad}_Rf, \textrm{grad}_tf) \,,
\end{align}
for $\omega \in \mathfrak{so}(3)$, $v \in \mathfrak{r}^3 \simeq \mathbb{R}^3$, and $f:SE(3) \to \mathbb{R}$. The explicit form of these exponential maps and gradients for both translations and rotations are:

\textbf{Translations:} since $\mathbb{R}^3$ is a vector space, it holds that
\begin{align}
    \exp_t(v) = t + v \,, && \textrm{grad}_t f = \nabla_t f(t, R)
\end{align}
where we use the canonical isomorphism $\mathfrak{r}^3 \simeq \mathbb{R}^3$ and $\nabla_t$ denotes the standard gradient operator $\nabla = (\partial_1, \partial_2, \partial_3)$. As a result, we recover standard gradient descent for the translational part
\begin{align}
    t^{i+1} = t^{(i)} - \lambda \nabla_{t^i} \mathcal{L}(t^i, R^i) \,.
\end{align}

\textbf{Rotations:} as we discuss in Appendix~\ref{app:sec:gradientdescentonliegroups}, the right multiplication $R_g: G \to G$, $h \mapsto hg$ induces a isomorphism between the tangent spaces $T_gG$ and the Lie algebra $\mathfrak{g} \simeq T_e G$ for any Lie group $G$. It is often easier to work with the exponential map $\exp: \mathfrak{g} \to G$ in terms of Lie algebra elements. For matrix Lie groups, such as $SO(3)$, this exponential map is simply given in terms of the matrix exponential $\exp(v) = \sum_{i=0}^\infty \frac{v^n}{n!}$ where $v$ is the matrix Lie algebra element. Similarly, we can uniquely associate the Riemannian gradient $\textrm{grad}_R f$ of a function $f:SO(3) \to \mathbb{R}$ with the Lie algebra element
\begin{align}
    \nabla f(R) = \sum_a T^a \nabla ^a f \in \mathfrak{so}(3) \,, && \nabla^a f(R) = \frac{d}{dt} f(\exp(t T^a) R) \bigg|_{t=0} \,, \label{eq:riemanniangrad}
\end{align}
where $T^a$ denote the antisymmetric generators of the Lie algebra $\mathfrak{so}(3)$. Therefore, gradient descent on $SO(3)$ amounts to
\begin{align}
    R^{i+1} = \exp(- \lambda \nabla_{R^i} \mathcal{L}(R^i, t^i)) R^i \,.
\end{align}

\textbf{Summary:} Gradient descent on $SE(3)$ can be performed by
\begin{align}
T^{i+1} = 
\begin{bmatrix}
    R^{i+1} \\ 
    t^{i+1} 
\end{bmatrix}= 
\exp_{T^i}(-\lambda \nabla_{T^i} \mathcal{L}(T^i))  =    
\begin{bmatrix}
     \exp\left(-\lambda \nabla_{R^i} \mathcal{L}(R^i, t^i)\right) R^i \\ 
    t^{(i)} - \lambda 
\nabla_{t^i} \mathcal{L}(t^i, R^i) 
\end{bmatrix} 
\end{align}
In this sense, the gradient descent of the rotational and translational part decouples.

\subsection{Gradients of $\mathbf{SE(3)}$ Solvers}\label{sec:adjoint_state_method_on_se3}
Let $g: SE(3)^n \to SE(3)^n$ be a diffeomorphism. We restrict to $n=1$ for notational simplicity but the results immediately generalize to $n>1$. We use the notation $(R, t) = g(Z, z)$ with $Z \in SO(3)$ and $z \in \mathbb{R}^3$ denoting the base space variables. We can thus reparameterize the loss function $\mathcal{L}: SE(3) \to \mathbb{R}$ by
\begin{align}
    \mathcal{L}(g(Z,z))
\end{align}
for which we want to perform gradient descent with respect to the base variables $(Z, z) \in SE(3)$. 
In particular, we can consider flow matching and diffusion models for which the diffeomorphism $g$ is defined by $g(Z,z) \equiv (Z_1, z_1)$ where the right hand side is the solution of the following initial value problem on $SE(3)$:
\begin{align}
    dT_\tau = \left(dZ_\tau, dz_\tau\right) = \left(\; V_\theta(Z_\tau, z_\tau), \,v_\theta(Z_\tau,z_\tau) \; \right) \,d\tau \,, \quad \text{with} \quad 
    T_0 = (Z_0, z_0) = (Z, z) \label{eq:forwardODE}\,,
\end{align}
where $\tau \in [0,1]$ and $V_\theta \in \mathfrak{so}(3)$, $v_\theta \in \mathbb{R}^3$ are parameterized by neural networks. 
As a result, we need to backpropagate through the numerical solver of the ODE. Specifically, we need to calculate the Riemannian gradient $\nabla_Z \mathcal{L}(g(Z,z))$ with respect to rotation $Z \in SO(3)$. In the following, we will propose two methods to do so. 

\textbf{Repurposing Autograd for $\mathbf{SO(3)}$:}
the Riemannian gradient can be expressed in terms of a simple matrix derivative:
\begin{theorem}\label{th:autograd}
    The Riemannian gradient \eqref{eq:riemanniangrad} on $SO(3)$ of a loss function $\mathcal{L}:SO(3) \to \mathbb{R}$ can be written as
    \begin{align}
    \nabla \mathcal{L}(R) =  2\left[\frac{df}{dR} R^\top \right]_A\,,\label{eq:autogradtrick}
\end{align}
where $\frac{df}{dR}$ denotes the standard matrix-calculus gradient with respect to the matrix $R \in SO(3)$. Furthermore, we denote the antisymmetric component of a matrix $M$ by $[M]_A = \frac12(M - M^\top)$. 
\end{theorem}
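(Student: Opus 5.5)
We prove the identity by computing each component $\nabla^a\mathcal{L}(R)$ of the Riemannian gradient in \eqref{eq:riemanniangrad} with the ordinary chain rule, and then reassembling $\sum_a T^a\nabla^a\mathcal{L}$ as a matrix. Since $\mathcal{L}$ is differentiable, we extend it smoothly to a neighbourhood of $SO(3)$ in $\mathbb{R}^{3\times3}$; this is exactly what autograd does when it treats $R$ as an unconstrained matrix. Write $\frac{d\mathcal{L}}{dR}$ for the matrix of partial derivatives $\partial\mathcal{L}/\partial R_{ij}$ (we read $f$ in \eqref{eq:autogradtrick} as $\mathcal{L}$). The curve $t\mapsto \exp(tT^a)R$ lies in $SO(3)$, so the choice of extension does not matter. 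Its velocity at $t=0$ is $T^aR$, and the chain rule gives
\begin{align}
\nabla^a\mathcal{L}(R) = \Tr\!\Big(\big(\tfrac{d\mathcal{L}}{dR}\big)^{\top} T^a R\Big) = \Tr\!\Big(T^a\, R\big(\tfrac{d\mathcal{L}}{dR}\big)^{\top}\Big) = \Big\langle \tfrac{d\mathcal{L}}{dR}R^\top ,\, T^a\Big\rangle_F ,
\end{align}
where $\langle A,B\rangle_F=\Tr(A^\top B)$ is the Frobenius inner product.

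Set $M=\frac{d\mathcal{L}}{dR}R^\top$. Because $T^a$ is antisymmetric, it is Frobenius-orthogonal to the symmetric part of $M$. Hence $\nabla^a\mathcal{L}(R)=\langle [M]_A, T^a\rangle_F$. This is where the antisymmetric projection in \eqref{eq:autogradtrick} comes from: only the component of $M$ lying in $\mathfrak{so}(3)$ survives.

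Next we reassemble the components, $\nabla\mathcal{L}(R)=\sum_a T^a\langle [M]_A,T^a\rangle_F$. If the generators were Frobenius-orthonormal, this sum would be the orthogonal projection of $[M]_A$ onto $\mathfrak{so}(3)$, which is $[M]_A$ itself. The factor $2$ therefore has to come from the normalization of the $T^a$. For the standard generators $(T^a)_{bc}=-\epsilon_{abc}$ we have $\langle T^a,T^b\rangle_F=2\delta_{ab}$, which gives $\nabla\mathcal{L}(R)=2[M]_A$ and proves the claim. The same normalization corresponds to the Killing-form-induced metric $-\tfrac12\Tr(XY)$ making the $T^a$ orthonormal, which matches the paper's choice of metric.

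\textbf{Main obstacle.} The substantive step is this normalization bookkeeping, not the chain rule. We need to state the convention for the $T^a$ explicitly (the standard $\epsilon$-generators) and check $\Tr(T^{a\top}T^b)=2\delta_{ab}$ directly. We also need to confirm that the left-multiplication convention $\exp(tT^a)R$ in \eqref{eq:riemanniangrad} is what produces $\frac{d\mathcal{L}}{dR}R^\top$ rather than $R^\top\frac{d\mathcal{L}}{dR}$, i.e. that the trace is cycled in the right direction. A different normalization would change only the scalar prefactor, not the structure of the formula.
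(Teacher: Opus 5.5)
Your proposal is correct and follows the same route as the paper's proof. You differentiate $f(\exp(tT^a)R)$ with the ordinary chain rule to get a trace against $\frac{df}{dR}R^\top$, drop the symmetric part because $T^a$ is antisymmetric, and recover the factor $2$ from the generator normalization. The differences are only bookkeeping: you use the Frobenius product (so $\langle T^a,T^b\rangle_F=2\delta^{ab}$) where the paper uses $\frac12\Tr(A^\top B)$ and writes $\nabla^a f=\langle T^a, 2\frac{df}{dR}R^\top\rangle$, and you state explicitly the smooth extension of $\mathcal{L}$ off $SO(3)$ that the paper uses implicitly.
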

\begin{proof}
    See Appendix~\ref{app:proof_of_autograd}.
\end{proof}

This suggests a straightforward way to calculate Riemannian gradients with modern autograd frameworks. Specifically, one wraps any $SO(3)$-valued leaf variable in an autograd method acting as the identity in the forward pass but performing the multiplication \eqref{eq:autogradtrick} in its backward where $\smash{\frac{df}{dR}}$ is the output gradient. The gradient of this variable will correspond to the antisymmetric Riemannian gradient matrix \eqref{eq:riemanniangrad},{
see Appendix~\ref{app:riemannian_gradient_code} for a very concise pytorch implementation of this. We emphasize that this approach is completely general and ensures that the Riemannian gradient is seamlessly integrated in existing autograd functionality.

In particular, we can facilitate backpropagation through ODE solvers on $SO(3)$ by combining this repurposing trick with standard activation checkpointing at intermediate points of the integration trajectory. This allows us to limit the memory footprint of the computational graph to the desired degree.

\begin{figure}[h]
  \centering
 \includegraphics[width=0.7\linewidth]{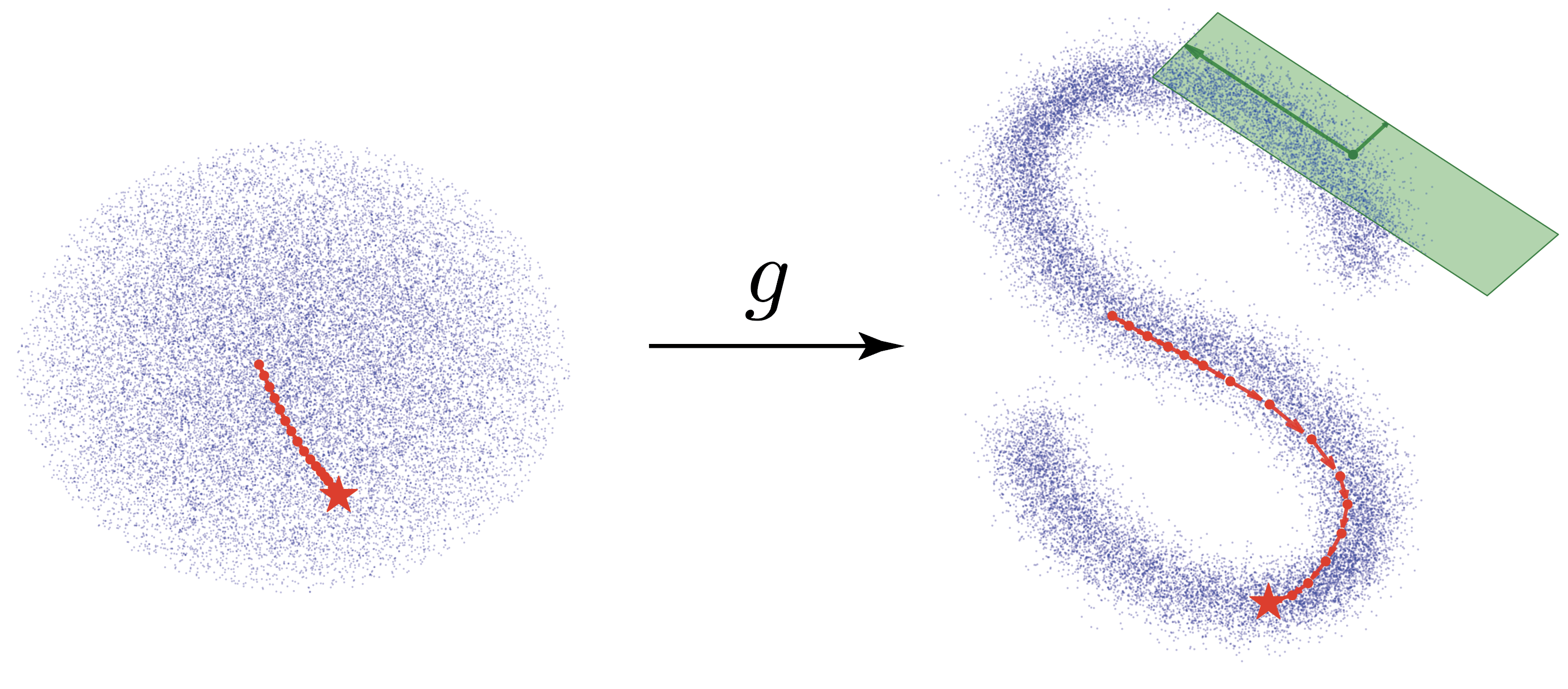}
  \caption{Diffeomorphic Optimization on $SO(3)$: left hand side visualizes the gradient descent trajectory in the base space $\mathcal{Z}$. Right hand side visualizes the same trajectory when mapped to the target space $\mathcal{X}$. Diffeomorphic optimization clearly stays on manifold. The green plane is spanned by the left-singular vectors of the Jacobian $\nabla_Z g(Z)$ scaled by the corresponding singular vectors. This shows that the diffeomorphic map indeed captures the tangent space of the data manifold to good approximation.}
  \label{fig:toy_example_so3}
\end{figure}

\textbf{Adjoint State Method on $\mathbf{SE(3)}$:}
the repurposing method has the advantage that it is rather general. However, for flow matching and diffusion models, the gradients can be written in terms of a particular adjoint state ODE. For the translational part, this is well known \cite{neuralode} and given by
\begin{align}
    \frac{d\mathcal{L}(g(Z,z))}{dz} = a_0
\end{align}
where the adjoint state $a_0 \in \mathbb{R}^3$ is obtained by solving the following terminal value problem
\begin{align}
    \frac{da_\tau}{d\tau} = -\sum_{i=1}^3 a^i_\tau \frac{dv^i_\theta(Z_\tau, z_\tau)}{dz_\tau} \,, && a_1= \frac{d\mathcal{L}(R,t)}{dt} \,,
\end{align}
with $\tau \in [0,1]$. In the appendix, we derive a generalization of the adjoint state method for $SO(3)$:
\begin{theorem}\label{th:adjoint_state}
    The Riemannian gradient  of the reparameterized loss function 
    \begin{align}
    \nabla_Z \mathcal{L}(g(Z,z)) = A_0
    \end{align}
    can be obtained by the following terminal value problem for the Lie-algebra-valued adjoint state $A_\tau=\sum_{i=1}^3 A^i_\tau T^i \in \mathfrak{so}(3)$:
    \begin{align}
        \frac{d A_\tau}{d\tau} &= [V_\theta(Z_\tau, t_\tau), A_\tau] -  \sum_{i=1}^3 A^i_\tau \nabla_{Z_\tau} V_\theta^i(Z_\tau, t_\tau) \,, &&
        A_1 = \nabla_R \mathcal{L}(R, t) \,, \label{eq:backwardadjstateso3} 
    \end{align}
    where $\tau \in [0,1]$ and $[A,B]=AB-BA$ is the commutator of matrices $A$ and $B$.
\end{theorem}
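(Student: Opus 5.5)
We would prove Theorem~\ref{th:adjoint_state} by tracking how a right-trivialized perturbation of the initial rotation propagates along the flow \eqref{eq:forwardODE}, and then dualizing. We read the rotational part of \eqref{eq:forwardODE} in the right-trivialized form used for the Riemannian gradient \eqref{eq:riemanniangrad}, namely $\dot Z_\tau = V_\theta(Z_\tau, z_\tau)\, Z_\tau$ with $V_\theta \in \mathfrak{so}(3)$. For fixed $z$, we perturb the initial condition as $Z_0(\epsilon) = \exp(\epsilon B_0) Z$ with $B_0\in\mathfrak{so}(3)$. We then define the Lie-algebra-valued variation $B_\tau = \frac{d}{d\epsilon}Z_\tau(\epsilon)\big|_{\epsilon=0} Z_\tau^\top$, so that $\frac{d}{d\epsilon} Z_\tau(\epsilon)|_{\epsilon=0} = B_\tau Z_\tau$. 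The goal is a linear ODE for $B_\tau$ (the tangent/forward sensitivity equation), followed by the usual adjoint duality argument.

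\textbf{Step 1 (variational equation).} Differentiate $\dot Z_\tau = V_\theta Z_\tau$ with respect to $\epsilon$. The left side gives $\dot B_\tau Z_\tau + B_\tau V_\theta Z_\tau$. The right side gives $\big(\sum_i B^i_\tau \nabla^i V_\theta\big) Z_\tau + V_\theta B_\tau Z_\tau$, where $\nabla^i V_\theta = \frac{d}{dt}V_\theta(\exp(tT^i)Z_\tau,z_\tau)|_{t=0}$ is exactly the Riemannian derivative from \eqref{eq:riemanniangrad}. Multiplying on the right by $Z_\tau^\top$ yields
\[
\dot B_\tau = [V_\theta, B_\tau] + \sum_i B^i_\tau\, \nabla^i V_\theta .
\]
This is a linear ODE $\dot B = \mathcal{K}_\tau B$ on $\mathfrak{so}(3)\cong\mathbb{R}^3$.

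\textbf{Step 2 (duality).} We equip $\mathfrak{so}(3)$ with the inner product $\langle X,Y\rangle$ for which the $T^a$ are orthonormal; up to normalization this is $-\tfrac12\Tr(XY)$. By the chain rule and the definition \eqref{eq:riemanniangrad},
\[
d\mathcal{L}[B_0] = \langle A_1, B_1\rangle, \qquad A_1=\nabla_R\mathcal{L}(R,t).
\]
We then require $\frac{d}{d\tau}\langle A_\tau, B_\tau\rangle = 0$, which gives $\dot A_\tau = -\mathcal{K}_\tau^{*}A_\tau$. Conservation implies $d\mathcal{L}[B_0] = \langle A_0, B_0\rangle$ for all $B_0$, hence $\nabla_Z\mathcal{L}(g(Z,z)) = A_0$.

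\textbf{Step 3 (computing the adjoint).} The commutator part uses ad-invariance of the Killing form: $\langle A,[V,B]\rangle = -\langle [V,A],B\rangle$. Therefore $-(\mathrm{ad}_V)^* = \mathrm{ad}_V$, which produces the term $+[V_\theta, A_\tau]$. For the second part, $\langle A, \sum_i B^i \nabla^i V\rangle = \sum_{i,j} A^j B^i \nabla^i V^j$, so its adjoint is the vector with $i$-th component $\sum_j A^j\nabla^i V^j$. This is precisely $\sum_j A^j \nabla_{Z} V^j_\theta$ as a Lie algebra element, and it enters with a minus sign. Together these give \eqref{eq:backwardadjstateso3}.

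\textbf{Main obstacle.} The main obstacle is bookkeeping of conventions rather than ideas. The derivation must keep the right-trivialization consistent with the left-multiplication by $\exp(tT^a)$ in \eqref{eq:riemanniangrad}, and the sign of the commutator flips under the opposite convention. It must also ensure the normalization of the $T^a$ makes the coordinate gradient and the trace pairing agree. I would first verify the sign of the $[V,A]$ term on the constant field $V_\theta\equiv V$. There $Z_1=\exp(V)Z$, and directly $\nabla_Z\mathcal{L} = \exp(-V)\,A_1\exp(V)$. This must match the solution of $\dot A=[V,A]$ run backward, and it does, since $A_\tau = e^{\tau V}A_0e^{-\tau V}$. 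The coupling through $z_\tau$, which involves cross terms with the translational adjoint $a_\tau$, would be handled analogously. The stated theorem only concerns the rotational block with $z$ fixed.
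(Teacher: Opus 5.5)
Your proof is correct and reaches the paper's equation by a different route. The paper works backward through a single step. It takes the one-step map $T_\epsilon: Z_\tau\mapsto\exp(\epsilon V_\theta(Z_\tau))Z_\tau$ and applies its Lie-algebra chain rule $A^a_\tau=\sum_b A^b_{\tau+\epsilon}\,\mathfrak{D}^{ba}T_\epsilon$. It then expands $\mathfrak{D}^{ba}T_\epsilon=\delta^{ba}+\epsilon\langle T^b,[V_\theta,T^a]\rangle+\epsilon\langle T^b,\nabla^a_{Z_\tau}V_\theta\rangle+\mathcal{O}(\epsilon^2)$, moves the commutator to the other slot by trace cyclicity, and lets $\epsilon\to 0$. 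You instead derive the exact forward variational equation for a right-trivialized perturbation, and obtain the adjoint operator $-\mathcal{K}_\tau^{*}$ from conservation of $\langle A_\tau,B_\tau\rangle$.

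Your version gives an exact identity rather than a limit of one-step expansions. It therefore avoids the $\mathcal{O}(\epsilon^2)$ bookkeeping, including the fact that $T_\epsilon$ only approximates the true flow map to first order. It also avoids the appendix's $\mathfrak{D}$ machinery. Finally, it shows exactly where ad-invariance of the Killing form is used, namely $\mathrm{ad}_V^{*}=-\mathrm{ad}_V$. For a non-ad-invariant inner product the commutator term would instead be $-\mathrm{ad}_{V_\theta}^{*}A_\tau$, and this is what any extension to other matrix groups depends on. The paper's derivation, in turn, mirrors backpropagation through one Lie--Euler solver step, so it connects more directly to the discrete autograd gradient.

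Both arguments treat $z_\tau$ as unaffected by the perturbation of $Z$; the paper's $T_\epsilon$ has no $z$-dependence at all. As you anticipate, in the coupled system $\dot A_\tau$ picks up an extra term $-\sum_k a^k_\tau\nabla_{Z_\tau}v^k_\theta$. So the stated equation is exact only when $z_\tau$ is an exogenous input (e.g.\ $v_\theta$ independent of $Z$). Holding the initial $z$ fixed is not enough.
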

\begin{proof}
    See Appendix~\ref{app:proof_of_adjoint_state}.
\end{proof}

\cite{wang2024training} used the adjoint method on $SO(3)$ to calculate gradients with respect to deterministic optimal control which they used as guidance.
As shown in the appendix, this result also holds for any matrix Lie group. A notable advantage of the adjoint state method is that it turns the gradient calculation into a terminal value problem and thus can benefit from modern ODE solvers. However, the adjoint state method has the downside that it assumes that the forward pass has no discretization errors. Depending on the stiffness of the ODE, one may therefore accumulate a systematic error by the discretization effects of both the forward ODE \eqref{eq:forwardODE} and the backward adjoint state ODE \eqref{eq:backwardadjstateso3}. In practice, we recommend relying on numerical experiments to benchmark the relative performance of the adjoint state method or the repurposed autograd method for the problem at hand. We will provide a modular integration library that implements both methods.

\begin{figure}[h]
  \centering
\includegraphics[width=1.0\linewidth]{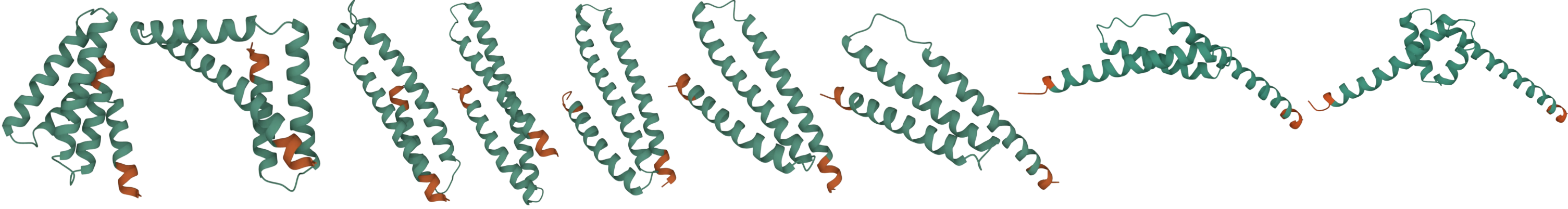}
  \caption{We show various snapshots of the trajectory of diffeomorphic maximization of the distance between the termini of the proteins (marked in red). The optimization traverses a number of plausible conformations.}
  \label{fig:ends}
\end{figure}

\section{Experiments}
We demonstrate the generality of diffeomorphic optimization in several numerical experiments. We refer to Appendix~\ref{app:experiments} for more details.

\textbf{$\mathbf{SO(3)}$ Manifold:}
In Figure~\ref{fig:toy_example_so3}, we visualize diffeomorphic optimization for a toy dataset on $SO(3)$. Specifically, we create a dataset of S-shape in the angle-axis vectors of the rotations. We then train a flow matching model to sample $SO(3)$ elements on this manifold. The diffeomorphic optimization aims to push the sample as close as possible to the point marked by a star on the S-shaped data manifold. 
As the figure demonstrates, diffeomorphic optimization leads to a gradient descent trajectory on the data manifold. 
This illustrates that diffeomorphic optimization can harness the differential geometric notions learned by the flow model.

\begin{figure}[h]
  \centering
    \includegraphics[width=0.8\linewidth]{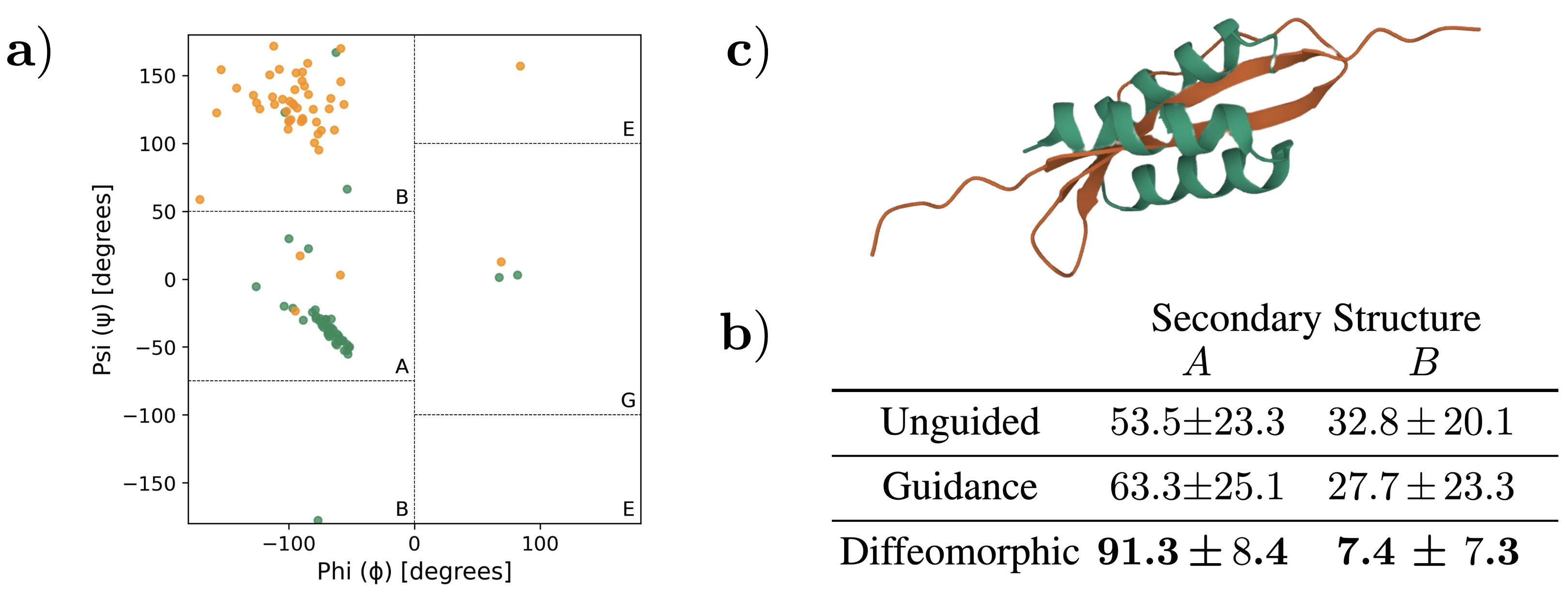}
  \caption{Diffeomorphic optimization is used to change the secondary structure of the protein. For this, we use the ABEGO classification shown on the left. We start from the green protein shown on the top right whose residues are mostly clustered in the A region of the Ramachandran plot which corresponds to $\alpha$-helices. We then optimize the structure such that these residues are pushed to the B region corresponding to $\beta$-sheets. The resulting protein structure is shown in orange in the top right. The final and initial Ramachandran plot are shown on the left. The lower right shows comparison to guidance for 50 samples obtained from the model. Diffeomorphic optimization significantly outperforms guidance despite careful tuning of the baseline hyperparameters described in the appendix.}
  \label{fig:rama}
\end{figure}

\begin{table}[h!]
\centering
\caption{Evaluation of OC-Flow peptide design.}
\resizebox{\textwidth}{!}{
\begin{tabular}{lccccccc}
\hline
 & MadraX $\downarrow$ & RMSD $\downarrow$ & SSR \% $\uparrow$ & BSR \% $\uparrow$ & Stability $\downarrow$ & Affinity $\downarrow$ & Diversity $\uparrow$ \\
\hline
Ground-truth        & -0.588 & --    & --    & --    & -84.893 & -36.063 & --    \\
PepFlow             & -0.195 & 1.645 & 0.794 & 0.874 & -45.660 & -26.538 & 0.310 \\
OC-Flow(trans)      & -0.229 & 1.774 & 0.797 & 0.876 & -48.380 & -27.328 & 0.323 \\
OC-Flow(rot)        & -0.221 & 1.643 & 0.794 & 0.872 & -48.636 & -27.211 & 0.310 \\
OC-Flow(trans+rot)  & -0.263 & 2.127 & \textbf{0.797} & 0.869 & -48.853 & -27.468 & 0.338 \\
\hline
DiffeoOpt       & \textbf{-0.309} & \textbf{1.605} & 0.796 & \textbf{0.881} & \textbf{-49.417} & \textbf{-28.409} & \textbf{0.340} \\
\hline
\label{table:OCFlowComparison}
\end{tabular}
}
\end{table}

\textbf{Peptide Design:}
we compare diffeomorphic optimization to the OC-Flow framework \cite{wang2024training} in which a control parameter $\theta_t$ is added to the flow in the form of $\smash{g(z) = z + \int_0^1 v_\theta(x_\tau) + \theta_\tau d\tau}$, see Appendix~\ref{app:related_work} for more details.
We kept the experimental setup identical to \citet{wang2024training} and compare to our proposed diffeomorphic optimization of both translation and rotations of the peptide's backbone. We use the repurposing approach and summarize the results in Table~\ref{table:OCFlowComparison}. We stress that this improvement can be obtained while being $2\times$ faster in terms of runtime. 

\textbf{Secondary Structure Modification with FrameFlow:} we use FrameFlow \cite{frameflow}, a widely used backbone generation flow matching model, to optimize the backbone structure with respect to a given cost function. Figure~\ref{fig:ends} illustrates an optimization trajectory for which the distance between the protein termini is maximized. 
This demonstrates that our method can optimize a given protein structure while staying on the data manifold. Similarly, Figure~\ref{fig:rama} shows that we can also optimize the secondary structure of a protein as specified by the ABEGO classification scheme \cite{abego, kim2009sampling}.

\begin{wrapfigure}{R}{0.35\textwidth}
\centering
\includegraphics[width=0.3\textwidth]{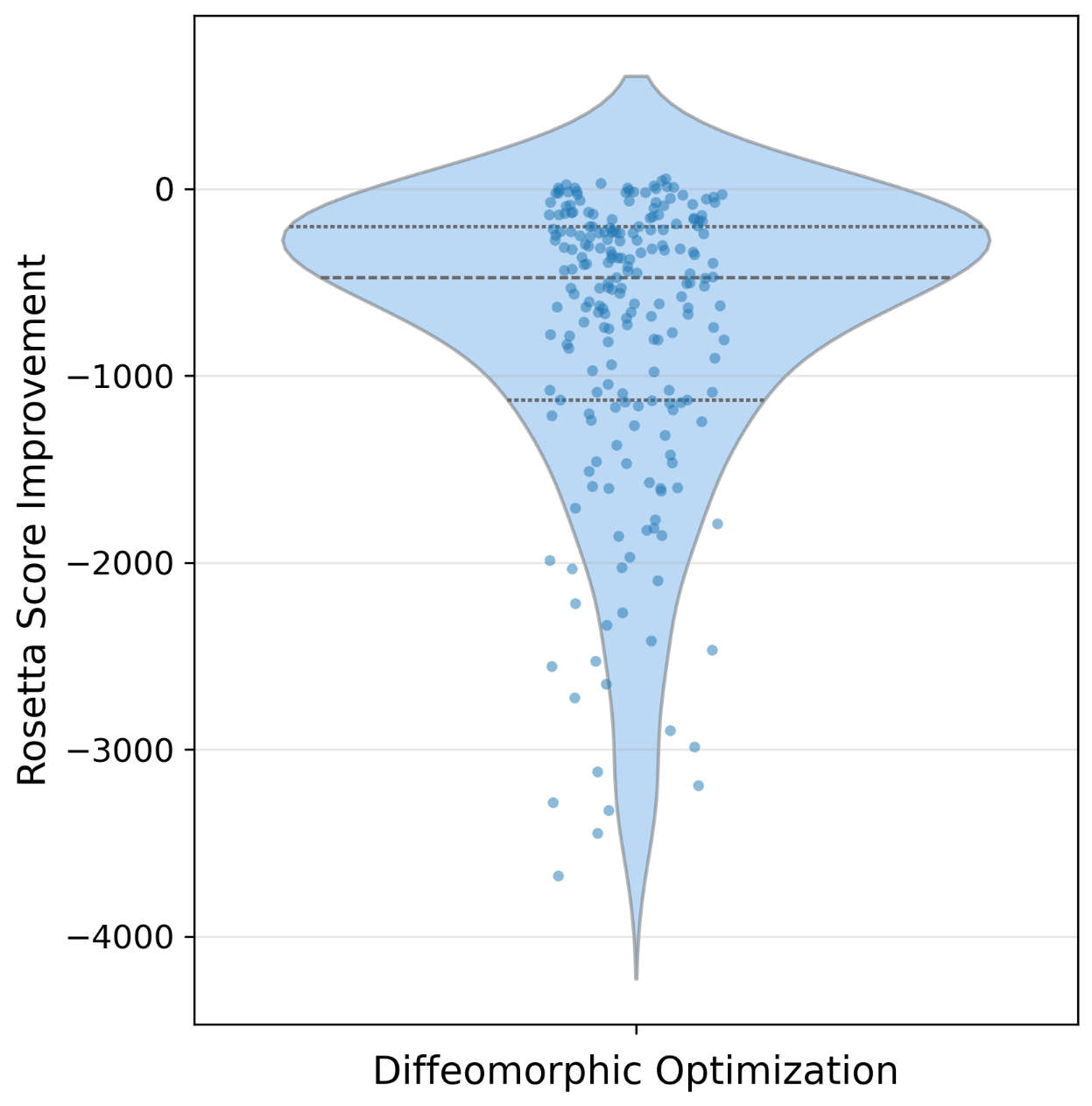}
\caption{\label{fig:rosetta_single}Diffeomorphic optimization applied to AlphaFlow exhibits strong improvement as measured in the Rosetta energy score.}
\end{wrapfigure}
For this, we minimize an energy function that is based on the Ramachandran-PAAPP-combined term in the Rosetta energy \cite{rosetta, tmol} which interpolates energies derived from statistics of backbone dihedral preferences in the PDB on a toroidal grid. The grid values are modified such that the undesired regions are disfavored. We compare to guidance with carefully tuned hyperparameters. 
Diffeomorphic optimization significantly outperforms the  while maintaining designability with results in Appendix~\ref{app:experiments_secondary}.

\textbf{Protein-Ligand Docking with DiffDock:} DiffDock is a diffusion model that samples both the center of mass translation and rotation (i.e. an $SE(3)$ element) and the torsion angles of the ligand for a given protein \cite{diffdock}. The model is trained on the PDBBind dataset \cite{wang2005pdbbind}, which contains experimentally determined protein-ligand docking structures. We then apply diffeomorphic optimization to maximize the popular VinaSF score of the protein-ligand complex, utilizing its OpenDock implementation \cite{opendock}.

To enable backpropagation through the solver, we modify the DiffDock generation process to use probability ODE sampling through our custom integration library. We then optimize the score function on pdb test set samples.
We compare diffeomorphic optimization to iid sampling followed by selecting the sample with the best docking score using the same computational budget. For this, each gradient descent step is counted as three sampling trajectories, which is quite generous for the baseline. Figure~\ref{fig:img2} shows the improvement $S_{\textrm{diffeo}}-S_{\textrm{iid}}$ obtained by diffeomorphic optimization which significantly outperforms this baseline. Our method can thus efficiently combine physics-based dockingr scores with trained generative models.
Further analysis is provided in Appendix~\ref{app:experiments_docking}.

\begin{wrapfigure}{R}{0.35\textwidth}
\centering
\vspace{-20pt}
\includegraphics[width=0.3\textwidth]{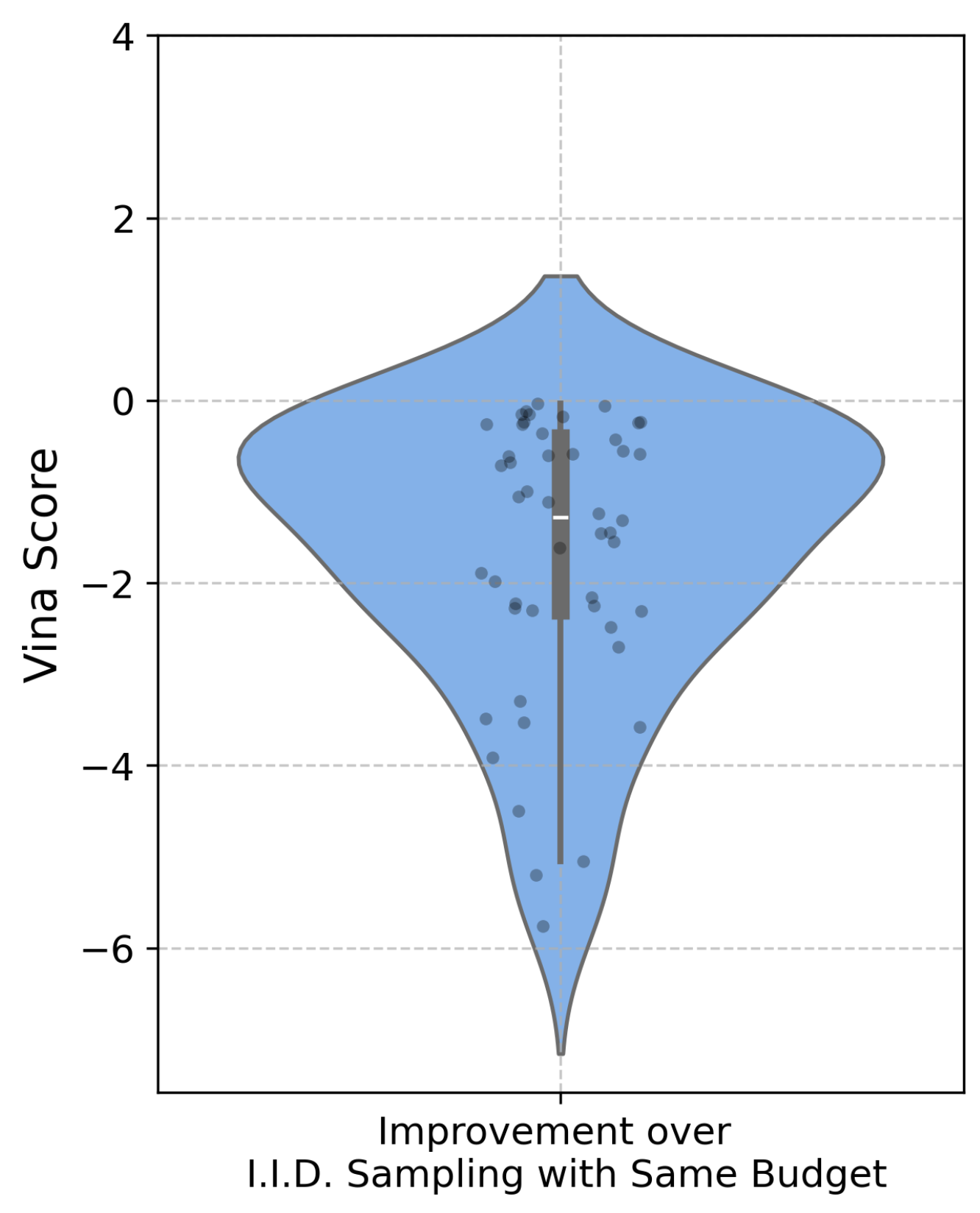}
\caption{\label{fig:img2}Diffeomorphic optimization leads to better/lower vina score than iid sampling with the same budget.}
\end{wrapfigure}

\textbf{Minimization of Rosetta Energy with AlphaFlow:} we consider the Rosetta energy function \cite{rosetta} which is widely used in the protein community using the tmol pytorch implementation \cite{tmol} of $\text{beta\_nov2016\_cart}$. We select the same pdb test set as in the AlphaFlow publication \cite{alphaflow}. 

This reference proposed several flows that are obtained from protein structure prediction models. 
We select the ESMFold model for convenience, as it does not require MSA processing.  We compare diffeomorphic optimization to the state-of-the-art Rosetta Relax protocol. 
From our theoretical analysis, we expect that diffeomorphic optimization will lead to better mode mixing. Once an energetically favorable mode is reached, it is cheaper to use standard relaxation to find its minimum. We thus first minimize the Rosetta energy function in the base space of the flow which is then followed by standard Rosetta Relax. 

Figure~\ref{fig:rosetta_single} shows that diffeomorphic optimization leads to substantially lower energy values throughout the pdb test set. Furthermore, the energy values of the baseline Rosetta Relax do not improve significantly by running a longer optimization or using more seeds. Thus, the relative advantage is not a function of the higher cost of diffeomorphic optimization.
Further analysis is provided in Appendix~\ref{app:experiments_tertiary}.

\section{Conclusion and Limitations} We have proposed diffeomorphic optimization which allows us to minimize arbitrary differentiable cost functions on the data manifold. 
This is achieved with off-the-shelf autograd engines with simple gradient wrappers making it a readily available tool for practitioners.
A downside of this approach are the considerable numerical costs due to the backpropagation through the generation. However, sampling costs are of lower concern in protein design where the main bottleneck is experimental wet-lab verification of the designs. 
It is completely standard in this setting to sample thousands of designs, rank them, and submit only a handful for wetlab experiments. Diffeomorphic optimization provides a more targeted approach to obtain high quality samples. 

\newpage
\bibliography{refs}

@article{manifold1,
  title={Testing the manifold hypothesis},
  author={Fefferman, Charles and Mitter, Sanjoy and Narayanan, Hariharan},
  journal={Journal of the American Mathematical Society},
  volume={29},
  number={4},
  pages={983--1049},
  year={2016}
}

@article{manifold2,
  title={Verifying the union of manifolds hypothesis for image data},
  author={Brown, Bradley CA and Caterini, Anthony L and Ross, Brendan Leigh and Cresswell, Jesse C and Loaiza-Ganem, Gabriel},
  journal={arXiv preprint arXiv:2207.02862},
  year={2022}
}

@article{manifold3,
  title={Hardness of Learning Neural Networks Under the Manifold Hypothesis},
  author={Kiani, Bobak and Wang, Jason and Weber, Melanie},
  journal={Advances in Neural Information Processing Systems},
  volume={37},
  pages={5661--5696},
  year={2024}
}

@article{frameflow,
  title={Fast protein backbone generation with se (3) flow matching},
  author={Yim, Jason and Campbell, Andrew and Foong, Andrew YK and Gastegger, Michael and Jim{\'e}nez-Luna, Jos{\'e} and Lewis, Sarah and Satorras, Victor Garcia and Veeling, Bastiaan S and Barzilay, Regina and Jaakkola, Tommi and others},
  journal={arXiv preprint arXiv:2310.05297},
  year={2023}
}

@article{framediff,
  title={SE (3) diffusion model with application to protein backbone generation},
  author={Yim, Jason and Trippe, Brian L and De Bortoli, Valentin and Mathieu, Emile and Doucet, Arnaud and Barzilay, Regina and Jaakkola, Tommi},
  journal={arXiv preprint arXiv:2302.02277},
  year={2023}
}

@inproceedings{alphaflow,
  title={AlphaFold Meets Flow Matching for Generating Protein Ensembles},
  author={Jing, Bowen and Berger, Bonnie and Jaakkola, Tommi},
  year={2024},
  booktitle={Forty-first International Conference on Machine Learning}
}

@inproceedings{diffdock,
    title={DiffDock: Diffusion Steps, Twists, and Turns for Molecular Docking}, 
    author = {Corso, Gabriele and Stärk, Hannes and Jing, Bowen and Barzilay, Regina and Jaakkola, Tommi},
    booktitle={International Conference on Learning Representations (ICLR)},
    year={2023}
}

@article{vina,
  title={AutoDock Vina: Improving the speed and accuracy of docking with a new scoring function, efficient optimization, and multithreading},
  author={Trott, Oleg and Olson, Arthur J},
  journal={Journal of Computational Chemistry},
  volume={31},
  number={2},
  pages={455--461},
  year={2010},
  publisher={Wiley Online Library},
  doi={10.1002/jcc.21334}
}

@article{albergo2022building,
  title={Building Normalizing Flows with Stochastic Interpolants},
  author={Albergo, Michael S. and Vanden-Eijnden, Eric},
  journal={arXiv preprint arXiv:2209.15571},
  year={2022},
  url={https://arxiv.org/abs/2209.15571}
}

@article{alphafold2,
  title={Highly accurate protein structure prediction with AlphaFold},
  author={Jumper, John and Evans, Richard and Pritzel, Alexander and Green, Tim and Figurnov, Michael and Ronneberger, Olaf and Tunyasuvunakool, Kathryn and Bates, Russ and {\v{Z}}{\'\i}dek, Augustin and Potapenko, Anna and others},
  journal={nature},
  volume={596},
  number={7873},
  pages={583--589},
  year={2021},
  publisher={Nature Publishing Group}
}

@misc{tmol,
    author       = {Leaver-Fay, Andrew and Flatten, Jeff  and Ford, Alex  and Kleinhenz, Joseph and Solberg, Henry  amd Baker, David and Watkins, Andrew M and Kuhlman, Brian and DiMaio, Frank},
    title        = {tmol: a GPU-accelarated, PyTorch implementation of Rosetta’s relax protocol (manuscript in preparation)},
    year         = {2025},
    url          = {https://github.com/uw-ipd/tmol}
  }

@book{lee2018introduction,
  title={Introduction to Riemannian manifolds},
  author={Lee, John M},
  volume={2},
  year={2018},
  publisher={Springer}
}

@article{neuralode,
  title={Neural ordinary differential equations},
  author={Chen, Ricky TQ and Rubanova, Yulia and Bettencourt, Jesse and Duvenaud, David K},
  journal={Advances in neural information processing systems},
  volume={31},
  year={2018}
}

@article{rosetta,
  title={The Rosetta all-atom energy function for macromolecular modeling and design},
  author={Alford, Rebecca F and Leaver-Fay, Andrew and Jeliazkov, Jeliazko R and O’Meara, Matthew J and DiMaio, Frank P and Park, Hahnbeom and Shapovalov, Maxim V and Renfrew, P Douglas and Mulligan, Vikram K and Kappel, Kalli and others},
  journal={Journal of chemical theory and computation},
  volume={13},
  number={6},
  pages={3031--3048},
  year={2017},
  publisher={ACS Publications}
}

@article{abego,
  title={Automatic classification and analysis of $\alpha$$\alpha$-turn motifs in proteins},
  author={Wintjens, Ren{\'e} T and Rooman, Marianne J and Wodak, Shoshana J},
  journal={Journal of molecular biology},
  volume={255},
  number={1},
  pages={235--253},
  year={1996},
  publisher={Elsevier}
}

@article{wang2005pdbbind,
  title={The PDBbind database: methodologies and updates},
  author={Wang, Renxiao and Fang, Xueliang and Lu, Yipin and Yang, Chao-Yie and Wang, Shaomeng},
  journal={Journal of medicinal chemistry},
  volume={48},
  number={12},
  pages={4111--4119},
  year={2005},
  publisher={ACS Publications}
}

@article{opendock,
    author = {Hu, Qiuyue and Wang, Zechen and Meng, Jintao and Li, Weifeng and Guo, Jingjing and Mu, Yuguang and Wang, Sheng and Liangzhen, Zheng and Wei, Yanjie},
    title = "{OpenDock: A pytorch-based open-source framework for protein-ligand docking and modelling}",
    journal = {Bioinformatics},
    pages = {btae628},
    year = {2024},
    month = {10},
    issn = {1367-4811},
    doi = {10.1093/bioinformatics/btae628},
    url = {https://doi.org/10.1093/bioinformatics/btae628},
    eprint = {https://academic.oup.com/bioinformatics/advance-article-pdf/doi/10.1093/bioinformatics/btae628/59933649/btae628.pdf},
}

@article{lou2020neural,
  title={Neural manifold ordinary differential equations},
  author={Lou, Aaron and Lim, Derek and Katsman, Isay and Huang, Leo and Jiang, Qingxuan and Lim, Ser Nam and De Sa, Christopher M},
  journal={Advances in Neural Information Processing Systems},
  volume={33},
  pages={17548--17558},
  year={2020}
}

@article{mathieu2020riemannian,
  title={Riemannian continuous normalizing flows},
  author={Mathieu, Emile and Nickel, Maximilian},
  journal={Advances in Neural Information Processing Systems},
  volume={33},
  pages={2503--2515},
  year={2020}
}

@article{bacchio2023learning,
  title={Learning trivializing gradient flows for lattice gauge theories},
  author={Bacchio, Simone and Kessel, Pan and Schaefer, Stefan and Vaitl, Lorenz},
  journal={Physical Review D},
  volume={107},
  number={5},
  pages={L051504},
  year={2023},
  publisher={APS}
}

@inproceedings{rezende2020normalizing,
  title={Normalizing Flows on Tori and Spheres},
  author={Rezende, Danilo Jimenez and Mohamed, Shakir},
  booktitle={International Conference on Machine Learning},
  pages={8083--8092},
  year={2020}
}

@article{dflow,
  title={D-flow: Differentiating through flows for controlled generation},
  author={Ben-Hamu, Heli and Puny, Omri and Gat, Itai and Karrer, Brian and Singer, Uriel and Lipman, Yaron},
  journal={arXiv preprint arXiv:2402.14017},
  year={2024}
}

@inproceedings{dombrowski2021diffeomorphic,
  title={Diffeomorphic explanations with normalizing flows},
  author={Dombrowski, Ann-Kathrin and Gerken, Jan E and Kessel, Pan},
  booktitle={ICML Workshop on Invertible Neural Networks, Normalizing Flows, and Explicit Likelihood Models},
  year={2021}
}

@article{dombrowski2023diffeomorphic,
  title={Diffeomorphic counterfactuals with generative models},
  author={Dombrowski, Ann-Kathrin and Gerken, Jan E and M{\"u}ller, Klaus-Robert and Kessel, Pan},
  journal={IEEE Transactions on Pattern Analysis and Machine Intelligence},
  volume={46},
  number={5},
  pages={3257--3274},
  year={2023},
  publisher={IEEE}
}

@article{alphafold3,
  title={Accurate structure prediction of biomolecular interactions with AlphaFold 3},
  author={Abramson, Josh and Adler, Jonas and Dunger, Jack and Evans, Richard and Green, Tim and Pritzel, Alexander and Ronneberger, Olaf and Willmore, Lindsay and Ballard, Andrew J and Bambrick, Joshua and others},
  journal={Nature},
  volume={630},
  number={8016},
  pages={493--500},
  year={2024},
  publisher={Nature Publishing Group UK London}
}

@article{kim2009sampling,
  title={Sampling bottlenecks in de novo protein structure prediction},
  author={Kim, David E and Blum, Ben and Bradley, Philip and Baker, David},
  journal={Journal of molecular biology},
  volume={393},
  number={1},
  pages={249--260},
  year={2009},
  publisher={Elsevier}
}

@article{rfab,
  title={Atomically accurate de novo design of single-domain antibodies},
  author={Bennett, Nathaniel R and Watson, Joseph L and Ragotte, Robert J and Borst, Andrew J and See, D{\'e}jena{\'e} L and Weidle, Connor and Biswas, Riti and Shrock, Ellen L and Leung, Philip JY and Huang, Buwei and others},
  journal={biorxiv},
  year={2024}
}

@article{rfdiffusion,
  title={De novo design of protein structure and function with RFdiffusion},
  author={Watson, Joseph L and Juergens, David and Bennett, Nathaniel R and Trippe, Brian L and Yim, Jason and Eisenach, Helen E and Ahern, Woody and Borst, Andrew J and Ragotte, Robert J and Milles, Lukas F and others},
  journal={Nature},
  volume={620},
  number={7976},
  pages={1089--1100},
  year={2023},
  publisher={Nature Publishing Group UK London}
}

@article{frey2025lab,
  title={Lab-in-the-loop therapeutic antibody design with deep learning},
  author={Frey, Nathan C and H{\"o}tzel, Isidro and Stanton, Samuel D and Kelly, Ryan and Alberstein, Robert G and Makowski, Emily and Martinkus, Karolis and Berenberg, Daniel and Bevers III, Jack and Bryson, Tyler and others},
  journal={bioRxiv},
  pages={2025--02},
  year={2025},
  publisher={Cold Spring Harbor Laboratory}
}

@article{joshi2019towards,
  title={Towards realistic individual recourse and actionable explanations in black-box decision making systems},
  author={Joshi, Shalmali and Koyejo, Oluwasanmi and Vijitbenjaronk, Warut and Kim, Been and Ghosh, Joydeep},
  journal={arXiv preprint arXiv:1907.09615},
  year={2019}
}

@article{dhurandhar2018explanations,
  title={Explanations based on the missing: Towards contrastive explanations with pertinent negatives},
  author={Dhurandhar, Amit and Chen, Pin-Yu and Luss, Ronny and Tu, Chun-Chen and Ting, Paishun and Shanmugam, Karthikeyan and Das, Payel},
  journal={Advances in neural information processing systems},
  volume={31},
  year={2018}
}

@article{pacesa2024bindcraft,
  title={BindCraft: one-shot design of functional protein binders},
  author={Pacesa, Martin and Nickel, Lennart and Schellhaas, Christian and Schmidt, Joseph and Pyatova, Ekaterina and Kissling, Lucas and Barendse, Patrick and Choudhury, Jagrity and Kapoor, Srajan and Alcaraz-Serna, Ana and others},
  journal={bioRxiv},
  pages={2024--09},
  year={2024},
  publisher={Cold Spring Harbor Laboratory}
}

@article{cho2025boltzdesign1,
  title={BoltzDesign1: Inverting All-Atom Structure Prediction Model for Generalized Biomolecular Binder Design},
  author={Cho, Yehlin and Pacesa, Martin and Zhang, Zhidian and Correia, Bruno and Ovchinnikov, Sergey},
  journal={bioRxiv},
  pages={2025--04},
  year={2025},
  publisher={Cold Spring Harbor Laboratory}
}

@article{anishchenko2021novo,
  title={De novo protein design by deep network hallucination},
  author={Anishchenko, Ivan and Pellock, Samuel J and Chidyausiku, Tamuka M and Ramelot, Theresa A and Ovchinnikov, Sergey and Hao, Jingzhou and Bafna, Khushboo and Norn, Christoffer and Kang, Alex and Bera, Asim K and others},
  journal={Nature},
  volume={600},
  number={7889},
  pages={547--552},
  year={2021},
  publisher={Nature Publishing Group UK London}
}

@article{goverde2023novo,
  title={De novo protein design by inversion of the AlphaFold structure prediction network},
  author={Goverde, Casper A and Wolf, Benedict and Khakzad, Hamed and Rosset, St{\'e}phane and Correia, Bruno E},
  journal={Protein Science},
  volume={32},
  number={6},
  pages={e4653},
  year={2023},
  publisher={Wiley Online Library}
}

@article{kosugi2022solubility,
  title={Solubility-aware protein binding peptide design using AlphaFold},
  author={Kosugi, Takatsugu and Ohue, Masahito},
  journal={Biomedicines},
  volume={10},
  number={7},
  pages={1626},
  year={2022},
  publisher={MDPI}
}

@article{dhariwal2021diffusion,
  title={Diffusion models beat gans on image synthesis},
  author={Dhariwal, Prafulla and Nichol, Alexander},
  journal={Advances in neural information processing systems},
  volume={34},
  pages={8780--8794},
  year={2021}
}

@article{ho2022classifier,
  title={Classifier-free diffusion guidance},
  author={Ho, Jonathan and Salimans, Tim},
  journal={arXiv preprint arXiv:2207.12598},
  year={2022}
}

@inproceedings{bansal2023universal,
  title={Universal guidance for diffusion models},
  author={Bansal, Arpit and Chu, Hong-Min and Schwarzschild, Avi and Sengupta, Soumyadip and Goldblum, Micah and Geiping, Jonas and Goldstein, Tom},
  booktitle={Proceedings of the IEEE/CVF Conference on Computer Vision and Pattern Recognition},
  pages={843--852},
  year={2023}
}

@article{reward1,
  title={Practical and asymptotically exact conditional sampling in diffusion models},
  author={Wu, Luhuan and Trippe, Brian and Naesseth, Christian and Blei, David and Cunningham, John P},
  journal={Advances in Neural Information Processing Systems},
  volume={36},
  pages={31372--31403},
  year={2023}
}

@article{reward2,
  title={Diffusion probabilistic modeling of protein backbones in 3d for the motif-scaffolding problem},
  author={Trippe, Brian L and Yim, Jason and Tischer, Doug and Baker, David and Broderick, Tamara and Barzilay, Regina and Jaakkola, Tommi},
  journal={arXiv preprint arXiv:2206.04119},
  year={2022}
}

@article{reward3,
  title={Monte Carlo guided diffusion for Bayesian linear inverse problems},
  author={Cardoso, Gabriel and Idrissi, Yazid Janati El and Corff, Sylvain Le and Moulines, Eric},
  journal={arXiv preprint arXiv:2308.07983},
  year={2023}
}

@inproceedings{reward4,
  title={Diffusion posterior sampling for linear inverse problem solving: A filtering perspective},
  author={Dou, Zehao and Song, Yang},
  booktitle={The Twelfth International Conference on Learning Representations},
  year={2024}
}

@article{singhal2025general,
  title={A general framework for inference-time scaling and steering of diffusion models},
  author={Singhal, Raghav and Horvitz, Zachary and Teehan, Ryan and Ren, Mengye and Yu, Zhou and McKeown, Kathleen and Ranganath, Rajesh},
  journal={arXiv preprint arXiv:2501.06848},
  year={2025}
}

@article{winkler2022stochastic,
  title={Stochastic control for bayesian neural network training},
  author={Winkler, Ludwig and Ojeda, C{\'e}sar and Opper, Manfred},
  journal={Entropy},
  volume={24},
  number={8},
  pages={1097},
  year={2022},
  publisher={MDPI}
}

@article{lipman2024flow,
  title={Flow matching guide and code},
  author={Lipman, Yaron and Havasi, Marton and Holderrieth, Peter and Shaul, Neta and Le, Matt and Karrer, Brian and Chen, Ricky TQ and Lopez-Paz, David and Ben-Hamu, Heli and Gat, Itai},
  journal={arXiv preprint arXiv:2412.06264},
  year={2024}
}

@article{kidger2021efficient,
  title={Efficient and accurate gradients for neural sdes},
  author={Kidger, Patrick and Foster, James and Li, Xuechen Chen and Lyons, Terry},
  journal={Advances in Neural Information Processing Systems},
  volume={34},
  pages={18747--18761},
  year={2021}
}

@inproceedings{winkler2024bridging,
  title={Bridging discrete and continuous state spaces: Exploring the Ehrenfest process in time-continuous diffusion models},
  author={Winkler, Ludwig and Richter, Lorenz and Opper, Manfred},
  booktitle={International Conference on Machine Learning},
  pages={53017--53038},
  year={2024},
  organization={PMLR}
}

@inproceedings{li2020scalable,
  title={Scalable gradients and variational inference for stochastic differential equations},
  author={Li, Xuechen and Wong, Ting-Kam Leonard and Chen, Ricky TQ and Duvenaud, David K},
  booktitle={Symposium on Advances in Approximate Bayesian Inference},
  pages={1--28},
  year={2020},
  organization={PMLR}
}

@article{wang2024training,
  title={Training free guided flow matching with optimal control},
  author={Wang, Luran and Cheng, Chaoran and Liao, Yizhen and Qu, Yanru and Liu, Ge},
  journal={arXiv preprint arXiv:2410.18070},
  year={2024}
}

@inproceedings{liu2023flowgrad,
  title={Flowgrad: Controlling the output of generative odes with gradients},
  author={Liu, Xingchao and Wu, Lemeng and Zhang, Shujian and Gong, Chengyue and Ping, Wei and Liu, Qiang},
  booktitle={Proceedings of the IEEE/CVF Conference on Computer Vision and Pattern Recognition},
  pages={24335--24344},
  year={2023}
}

@inproceedings{xie2026enhanced,
  title={Enhanced Diffusion Sampling: Efficient Rare Event Sampling and Free Energy Calculation with Diffusion Models},
  author={Xie, Yu and Winkler, Ludwig and Sun, Lixin and Lewis, Sarah and Foster, Adam and Jimenez-Luna, Jose and Hempel, Tim and Gastegger, Michael and Chen, Yaoyi and Zaporozhets, Iryna and others},
  booktitle={ICML 2026 Workshop on Structured Probabilistic Inference $\{$$\backslash$\&$\}$ Generative Modeling},
  year={2026}
}

@article{clark2023draft,
  title={Directly fine-tuning diffusion models on differentiable rewards},
  author={Clark, Kevin and Vicol, Paul and Swersky, Kevin and Fleet, David J},
  journal={arXiv preprint arXiv:2309.17400},
  year={2023}
}
\bibliographystyle{iclr_formatting/iclr2026_conference}


\newpage
\appendix

\section{Riemannian Gradient Implementation for $\mathbf{SO(3)}$}\label{app:riemannian_gradient_code}

As derived in Theorem~\ref{th:autograd}, we can efficiently modify existing autograd engines to calculate the $SO(3)$ Riemannian gradient. Specifically, we can wrap the tensor by an operation that acts like the identity in the forward pass and adjusts the gradient to coincide with the Riemannian gradient in the backward pass using Theorem~\ref{th:autograd}. In PyTorch, this can most conveniently be implemented with a backward hook as shown below. We emphasize that this implementation is completely general and ensures that the Riemannian gradient is seamlessly integrated in existing autograd functionality. 
\begin{lstlisting}
import torch
from scipy.spatial.transform import Rotation

def so3tensor(*args, **kwargs):
    x = torch.tensor(*args, **kwargs)

    def so3_backward_hook(grad):
        so3_grad = grad @ x.transpose(-1, -2)
        return so3_grad - so3_grad.transpose(-1, -2)

    if x.requires_grad:
        x.register_hook(so3_backward_hook)
    return x

x = so3tensor(Rotation.random(1).as_matrix(), requires_grad=True)
y = so3tensor(Rotation.random(1).as_matrix())

loss = (x - y).pow(2).sum()
so3_grad = torch.autograd.grad(loss, x)[0]
print(so3_grad)

>>tensor([[[ 0.0000,  0.8490,  0.1187],
           [-0.8490,  0.0000, -2.2494],
           [-0.1187,  2.2494,  0.0000]]], dtype=torch.float64)
    
\end{lstlisting}

\section{Expanded Related Work}\label{app:related_work}

\textbf{Backpropagation through folding models:} protein hallucination is a version of computational protein design which uses backpropagation through a folding model to its input sequence \cite{anishchenko2021novo,kosugi2022solubility,goverde2023novo,pacesa2024bindcraft}. Interestingly, even the most recent work \cite{cho2025boltzdesign1} based on AlphaFold3 avoids explicit backpropagation through its diffusion structure module and instead relies on the pair representation - a limitation that could potentially be overcome by our proposed methods. 

\textbf{Gradient descent and normalizing flows} has been explored in the explainability literature to generate counterfactual explanations \cite{joshi2019towards,dombrowski2021diffeomorphic, dombrowski2023diffeomorphic,dhurandhar2018explanations} although not for flow matching and diffusion models. 
\citet{dflow} proposes to differentiate through flows for controlled generation. Our work builds on this reference by generalizing it to matrix Lie groups, which is of high relevance for proteins. We also provide a detailed theoretical analysis of the method and derive an efficient adjoint state method. 
\citet{wang2024training} explores related ideas for matrix groups in the framework of optimal control. Specifically, the authors propose to add an additive control to the vector field of the flow. Our approach does not use a control but rather optimizes the initial condition following \cite{dflow}. We discuss the relationship to this reference in more detail in the appendix and compare in detailed numerical experiments to their approach. \citet{liu2023flowgrad} similarly relies on control variables but directly applies gradient descent to them. This reference does however not consider matrix groups.

\textbf{Adjoint state method on manifolds:} the adjoint state method on manifolds has been discussed in other works - however only in terms of charts \cite{lou2020neural, mathieu2020riemannian} or particular manifolds \cite{rezende2020normalizing, bacchio2023learning, albergo2022building}. The adjoint state method derived in our work is applicable to any matrix Lie group and is particularly efficient for the group $SE(3)$ which plays a crucial role the frame-based protein backbone representation.

\textbf{Guidance:} a widely used method to bias diffusion and flow-matching models towards certain desiderata is guidance. There exist various flavors of it, such as classifier-based guidance \cite{dhariwal2021diffusion}, classifier-free guidance \cite{ho2022classifier}, and universal guidance \cite{bansal2023universal}. There has also been substantial recent interest in biasing the generation with additional reward functions~\cite{reward1, reward2, reward3, reward4, xie2026enhanced, singhal2025general} often relying on sequential Monte Carlo. However, these methods struggle for guidance potentials that are sensitive to fine-grained details of the final sample, such as force fields or certain biochemical properties, as they tend to rely on few-shot denoising and require carefully hyperparameter finetuing of their starting point and time-dependent weighting factor. We discuss the limitations of guidance in more detail in Appendix~\ref{app:limitations_of_guidance}.

\textbf{Relation to \cite{wang2024training}:}
a closely related reference is \cite{wang2024training} which outlines a very nice approach to maximize a terminal reward using deterministic optimal control \cite{winkler2022stochastic}. Specifically, the authors propose to add an additive control $\theta_\tau$ to the vector field of the flow
\begin{align}
    \frac{dx_\tau}{d\tau} = v_\theta(x_\tau, \tau) && \rightarrow && \frac{dx_\tau}{d\tau} = v_\theta(x_\tau, \tau) + \theta_\tau
\end{align}
The authors then propose to optimize the control with respect to a (regularized) loss
\begin{align}
    \mathcal{L}(x^\theta_1) + d(x_1^\theta, x^0_1) + \frac12 \int_0^1 d\tau \, ||\theta_\tau||^2
\end{align}
where $x_1^\theta$ corresponds to the terminal value under the control $\theta_\tau$ and $x^0_1$ is the terminal value for no control. Furthermore, $d(\cdot, \cdot)$ denotes a distance. 

The optimal control approach defines control terms $\theta_\tau$ at intermediate time steps $\tau$ during the generative process.
By construction, the control thus acts upon the generation not only at the initial condition $\tau=0$ but also for $\tau >0$. 
A change in initial condition $x_0 \to x_0 + \delta x_0$ can then only be interpreted as a control if the latter is not a function $\forall \tau$ but rather concentrates all modelling capacity at only $\tau=0$. In the continuous time setting this would imply a control proportional to a Dirac impulse $\delta(\tau)$ in time, seen as follows
\begin{align}
    x_1 = x_0 + \delta x_0 + \int_0^1 v(x_\tau) d\tau  = x_0 + \int_0^1 (v_\theta(x_\tau) + \delta(\tau) \delta x_0 ) \, d \tau 
\end{align}
and thus implying a distributional control $\theta_\tau =  \delta(\tau) \delta x_0$. Note that this distributional control has highly undesirable properties from a numerics standpoint, i.e., it  is necessarily divergent for $\tau=0$ and vanishing for all other values of flow time $\tau$.
Intuitively, it would require the control to voluntarily constrain all its modelling capacity to the time of the initial condition $\tau=0$, something that is hard to achieve in practice with unconstrained optimization.

Diffusion and flow models are trained to map a simple base space variable to a target distribution.
Changing the base space variable ensures that the samples remain on the learned manifold in the target space.
This can not be guaranteed if the trajectory is changed at intermediate points.
The control parameters can exploit the loss function $\mathcal{L}(x_1^\theta)$ while moving the sample off manifold.
To counteract such movements, a distance loss term $d(x_1^\theta, x_1^0)$ in the target space is applied but this but simultaneously constraints how much the original loss function can be optimized.
Diffeomorphic optimization solves both problems conveniently at the same time by allowing free, unconstrained movement over the manifold while also remaining on the learned manifold at all times.

\section{Proofs}\label{app:proofs}

\subsection{Proof of Theorem~\ref{th:on_manifold}}\label{app:proof_on_manifold}

\begin{lemma} \label{lemma1}
    For $g:M \to N$, it holds that
    \begin{align}
        g( \exp_p(\lambda  v)) = \exp_{g(p)} (\lambda \, dg_p ( v ) + \mathcal{O}(\lambda^2))
    \end{align}
    for any $p\in M$, $v \in T_pM$, and small $\lambda \in \mathbb{R}$.
\end{lemma}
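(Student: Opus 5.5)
The plan is to compare both sides after pulling them back through the inverse of the exponential map at $g(p)$. Fix $p \in M$ and $v \in T_pM$, and assume $g$ is smooth. For $\lambda$ small enough, $\exp_{g(p)}$ restricts to a diffeomorphism from a neighbourhood of $0 \in T_{g(p)}N$ onto a normal neighbourhood $W$ of $g(p)$. Since $\lambda \mapsto g(\exp_p(\lambda v))$ is continuous and equals $g(p)$ at $\lambda = 0$, the curve lies in $W$ for $|\lambda|$ small. We can therefore define
\begin{align}
    w(\lambda) := \exp_{g(p)}^{-1}\bigl( g(\exp_p(\lambda v)) \bigr) \in T_{g(p)}N \,.
\end{align}
By construction $g(\exp_p(\lambda v)) = \exp_{g(p)}(w(\lambda))$ exactly, so the lemma reduces to showing $w(\lambda) = \lambda\, dg_p(v) + \mathcal{O}(\lambda^2)$.

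The map $w$ is a composition of smooth maps into the vector space $T_{g(p)}N$, so $w$ is smooth in $\lambda$. Since $w(0) = \exp_{g(p)}^{-1}(g(p)) = 0$, Taylor's theorem with remainder gives $w(\lambda) = \lambda\, w'(0) + \mathcal{O}(\lambda^2)$. It remains to compute $w'(0)$ by the chain rule:
\begin{align}
    w'(0) = d\bigl(\exp_{g(p)}^{-1}\bigr)_{g(p)} \circ dg_p \circ d(\exp_p)_0\,[v] \,.
\end{align}
This uses the standard fact (e.g.\ \cite{lee2018introduction}) that $d(\exp_q)_0 = \mathrm{id}_{T_qM}$ under the canonical identification $T_0(T_qM) \cong T_qM$. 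It follows from $\tfrac{d}{d\lambda}\exp_q(\lambda v)|_{\lambda=0} = \gamma_v'(0) = v$, by the definition of the geodesic $\gamma_v$ recalled in the background section. Applying this fact at $p$, and its inverse at $g(p)$, yields $w'(0) = dg_p(v)$, which completes the argument.

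The main obstacle is not the computation but making the $\mathcal{O}(\lambda^2)$ statement meaningful. The remainder must be a vector in the single fixed space $T_{g(p)}N$, and $\exp_{g(p)}^{-1}$ must be well defined. Both are handled by restricting to the normal neighbourhood. The constant in the remainder depends on $p$, $v$ and the second derivatives of $g$ and of the exponential maps, and is uniform on compact sets. For the application to Theorem~\ref{th:on_manifold}, take $v = -\textrm{grad}^{\tilde G}_z (\mathcal{L}\circ g)$ and use the pushforward metric $G$. The defining relation of the Riemannian gradient, together with the chain rule $d(\mathcal{L}\circ g)_z = d\mathcal{L}_{g(z)} \circ dg_z$, then gives $dg_z\bigl(\textrm{grad}^{\tilde G}_z(\mathcal{L}\circ g)\bigr) = \textrm{grad}^G_{g(z)}\mathcal{L}$.
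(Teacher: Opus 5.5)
Your proof is correct and is essentially the paper's argument. The paper Taylor-expands $g$ in normal coordinates at $p$ and at $g(p)$, and your $w(\lambda)=\exp_{g(p)}^{-1}\bigl(g(\exp_p(\lambda v))\bigr)$ is exactly that coordinate expression, since normal coordinates at $g(p)$ are the chart $\exp_{g(p)}^{-1}$; your coordinate-free version also handles the normal-neighbourhood domain issue and shows that the linear term needs only $d(\exp_q)_0=\mathrm{id}$, not the vanishing of the Christoffel symbols that the paper invokes.
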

\begin{proof}
    For proving this statement, it is useful to consider normal coordinates. For $p \in M$ and a neighborhood $U$ of $0 \in T_p M$, these coordinates are given by the chart $\psi: U \subset T_pM \to M$ with $\psi(v) = \exp_p(v)$. The coordinates $y^\alpha$ of a point $q \in M$ in this chart are obtained by
    \begin{align}
        q = \exp_p(y^\alpha e_\alpha) \,
    \end{align}
    where $e_\alpha$ is an orthonormal basis of $T_pM$ and we have used the Einstein summation convention. This implies that
    \begin{align}
        y^\alpha(\exp_q(v)) = v^\alpha
    \end{align}
    Using normal coordinates for both $M$ and $N$, we can expand the function $g$ using Taylor's theorem
    \begin{align}
        \widehat{g( \exp_p(\lambda  v))} =  g^\alpha(y^\beta) = g^\alpha(\lambda v^\beta) = g^\alpha(0) + \lambda \frac{\partial g^\alpha}{\partial y^\beta} v^\beta + \mathcal{O}(\lambda^2) = \widehat{\exp_{g(p)}} (\lambda \, \widehat{dg_p} ( v ) + \mathcal{O}(\lambda^2)) 
    \end{align}
    where we have used that the connection $\Gamma$ vanishes in normal coordinates and the hat symbol denotes the coordindate representation. Since the left and the right hand side are written in terms of covariant objects, the result stated in the theorem follows.
\end{proof}

\begin{lemma}\label{lemma2}
    Let $g: Z \to X$ be a diffeomorphism with $Z$ being a Riemannian manifold with metric $G$. Then, it holds that
    \begin{align}
        dg_z(\textrm{grad}_z^G \mathcal{L} \circ g) = \textrm{grad}^{\tilde{G}}_{g(z)} \mathcal{L}
    \end{align}
    where $\tilde{G} = g_* G$ is the pushforward metric of $G$.
\end{lemma}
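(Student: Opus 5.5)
The plan is to verify the defining property of the Riemannian gradient directly. By definition, $\textrm{grad}^{\tilde{G}}_{g(z)} \mathcal{L}$ is the unique vector $w \in T_{g(z)}X$ with $\tilde{G}_{g(z)}(u, w) = d\mathcal{L}_{g(z)}[u]$ for all $u \in T_{g(z)}X$. So it suffices to show that the candidate $w := dg_z(\textrm{grad}^G_z \mathcal{L}\circ g)$ satisfies this identity; uniqueness, which follows from nondegeneracy of $\tilde{G}$, then gives the claim.

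First fix the pushforward metric as in Theorem~\ref{th:on_manifold}:
\begin{align}
\tilde{G}_{g(z)}(u,w) = G_z\big(dg_z^{-1}u,\, dg_z^{-1}w\big).
\end{align}
This is well defined and is a genuine Riemannian metric because $g$ is a diffeomorphism. Hence $dg_z$ is a linear isomorphism $T_zZ \to T_{g(z)}X$, so $\tilde{G}_{g(z)}$ is symmetric and positive definite, and it depends smoothly on the base point since $g^{-1}$ is smooth.

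Next, take an arbitrary $u \in T_{g(z)}X$ and set $v := dg_z^{-1}u \in T_zZ$. Then
\begin{align}
\tilde{G}_{g(z)}(u, w) = G_z\big(v, \textrm{grad}^G_z \mathcal{L}\circ g\big) = d(\mathcal{L}\circ g)_z[v] = d\mathcal{L}_{g(z)}\big[dg_z v\big] = d\mathcal{L}_{g(z)}[u].
\end{align}
The second equality is the defining relation of the gradient on $Z$, and the third is the chain rule for differentials. The chain rule follows from the curve description of $dg_p$ given earlier: if $\gamma_v$ is a curve with velocity $v$, then $g\circ\gamma_v$ has velocity $dg_z v$, and differentiating $\mathcal{L}\circ g\circ\gamma_v$ at $t=0$ gives both sides.

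Since $u$ was arbitrary, $w$ satisfies the defining relation, and uniqueness completes the argument. There is no real analytic obstacle. The only delicate point is notational: the paper uses $G$ and $\tilde{G}$ with the roles swapped between Theorem~\ref{th:on_manifold} and this lemma. I will state explicitly that here $G$ lives on $Z$ and $\tilde{G}$ is its pushforward. As a sanity check in coordinates, the relation reads $\tilde{G}^{-1} = J_g G^{-1} J_g^\top$ together with $\nabla_z(\mathcal{L}\circ g) = J_g^\top \nabla_x \mathcal{L}$, which gives $J_g G^{-1}\nabla_z(\mathcal{L}\circ g) = \tilde{G}^{-1}\nabla_x\mathcal{L}$.
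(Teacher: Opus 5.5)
Your proof is correct, but it takes a different route from the paper's. The paper argues purely in coordinates. It writes the pushforward metric as $\tilde{G}_{\alpha\beta} = G_{\mu\nu}\frac{\partial z^\mu}{\partial x^\alpha}\frac{\partial z^\nu}{\partial x^\beta}$ and expands $dg_z(\textrm{grad}^G_z \mathcal{L}\circ g)$ by the chain rule as $\frac{\partial x^\alpha}{\partial z^\mu} G^{\mu\nu}\frac{\partial x^\sigma}{\partial z^\nu}\frac{\partial \mathcal{L}}{\partial x^\sigma}$. It then identifies the middle factor with $\tilde{G}^{\alpha\sigma}$, which is exactly your closing sanity check $\tilde{G}^{-1} = J_g G^{-1} J_g^\top$.

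Your main argument instead checks the defining relation $\tilde{G}_{g(z)}(u,w) = d\mathcal{L}_{g(z)}[u]$ for all $u$ and invokes uniqueness of the gradient. This gives a chart-free proof that never has to invert the transformed metric, a step the paper leaves implicit. It also makes clear that the lemma is pointwise linear algebra: it uses only that $dg_z$ is a linear isomorphism carrying $G_z$ to $\tilde{G}_{g(z)}$, plus the chain rule.

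The paper's index computation is shorter and displays $\tilde{G}^{-1} = J_g G^{-1} J_g^\top$ explicitly. For a Euclidean base metric this is $J_g J_g^\top$, which underlies the main text's reading of the learned tangent directions off the left-singular vectors of $J_g$.
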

\begin{proof}
    This statement is easily shown in coordinates. Let $z^\mu$ denote coordinates on $Z$ and $x^\alpha=g^\alpha(z)$. Then, the coordinate representation of the pushforward metric $\tilde{G}_{\alpha \beta}$ follows by
    \begin{align}
        G_{\mu \nu} dz^{\mu} dz^{\nu} =  G_{\mu \nu}  \frac{\partial z^\mu}{\partial x^\alpha} \frac{\partial z^\nu}{\partial x^\beta} dx^{\alpha} dx^{\beta} = \tilde{G}_{\alpha \beta}  dx^{\alpha} dx^{\beta}
    \end{align}
    In these coordinates, $dg_z(\textrm{grad}_z^G \mathcal{L} \circ g)$ is given by
    \begin{align}
        \frac{\partial x^\alpha}{\partial z^\mu} G^{\mu \nu} \frac{\partial (\mathcal{L} \circ g)}{\partial z^\nu} = \frac{\partial x^\alpha}{\partial z^\mu} G^{\mu \nu} \frac{\partial x^\sigma}{\partial z^\nu}\frac{\partial \mathcal{L}}{\partial x^\sigma} = \tilde{G}^{\alpha \sigma} \frac{\partial \mathcal{L}}{\partial x^\sigma}
    \end{align}
    which corresponds to the coordinate representation of the right hand side.
\end{proof}

Theorem~\ref{th:on_manifold} follows almost immediately using the the two lemmas:
\begin{proof}
    By Lemma~\ref{lemma1}, it follows that
    \begin{align}
         g(\exp_z(- \lambda \, \textrm{grad}^G_z \mathcal{L} \circ g ) )= \exp_{g(z)} ( - \lambda \, dg_z (\textrm{grad}^G_z \mathcal{L}) + \mathcal{O}(\lambda^2) ) \,.
    \end{align}
    The right hand side can then be rewritten by Lemma~\ref{lemma2} as
    \begin{align}
        \exp_{g(z)} ( - \lambda \, dg_z (\textrm{grad}^G_z \mathcal{L}) + \mathcal{O}(\lambda^2) ) = \exp_{g(z)} ( - \lambda \, \textrm{grad}^{\tilde{G}}_{g(z)} \mathcal{L} + \mathcal{O}(\lambda^2) )
    \end{align}
    which shows the stated result.
\end{proof}

\subsection{Proof of Theorem~\ref{th:autograd}}\label{app:proof_of_autograd}
We consider the definition of the Riemannian gradient on the Lie algebra
\begin{align}
    \partial^a f(R) &= \frac{d}{d\tau} f(\exp(\tau T^a) R)|_{\tau=0} \\
    &= \frac{df}{dR_{\alpha \beta}} \frac{d (\exp(\tau T^a) R)_{\alpha \beta}}{d \tau} |_{\tau=0} \\
    &= \frac{df}{dR_{\alpha \beta}} (T^a R)_{\alpha \beta} \\
    &= \frac{df}{dR_{\alpha \beta}} T^a_{\alpha \sigma} R_{\sigma \beta} \\
    &= \frac{df}{dR_{\alpha \beta}} T^a_{\alpha \sigma} (R^T)_{\beta \sigma} && | \quad T^a = - (T^a)^T\\
    &= - \frac{df}{dR_{\alpha \beta}} T^a_{\sigma \alpha } (R^T)_{\beta \sigma}  \\
    &= - \textrm{tr} \left( T^a \frac{df}{dR} R^T \right ) \\
    &= - \frac{1}{2} \textrm{tr} \left( T^a \, 2 \frac{df}{dR} R^T \right ) \\
    &= \langle T^a, 2 \frac{df}{dR} R^T \rangle
\end{align}
where we have used the Einstein summation convention and the definition of inner product $\langle A, B \rangle  = \frac12 \textrm{tr}A^T B$ for Lie algebra elements $A,B \in \mathfrak{so}(3)$. Recall that the Riemannian gradient is given by
\begin{align}
    \partial f \equiv \sum_a T^a \partial_a f(R) 
\end{align}
Due to the orthonormality of the generators, $\langle T^a, T^b\rangle = \delta^{ab}$ and denoting the antisymmetric part $[M]_A = \frac12 (M - M^T)$ of a matrix $M$, it thus holds that
\begin{align}
    \partial f = 2 \left[ \frac{df}{dR} R^T \right]_A \,,
\end{align}
The antisymmetric component arises in this step because for an arbitrary matrix $M$, we have $\langle T^a, M\rangle = \langle T^a, [M]_A \rangle$.

\subsection{Proof of Theorem~\ref{th:adjoint_state}}\label{app:proof_of_adjoint_state}
The adjoint state is given by
\begin{align}
    A_\tau = \nabla_{Z_\tau} \mathcal{L} 
\end{align}
for which we want to derive a differential equation for its time evolution.

For this, we define the time evolution operator
\begin{align}
    T_\epsilon: SO(3) \to SO(3)\,, && Z_\tau \mapsto Z_{\tau + \epsilon} = \exp(\epsilon V_\theta(Z_\tau)) \, Z_\tau \,.
\end{align}
We use the chain rule \eqref{eq:chain_rule_lie_algebra} to derive that
\begin{align}
    A^a_\tau = \nabla^a_{Z_{\tau}} \mathcal{L} &=  \sum_b \nabla^b_{Z_{\tau + \epsilon}} \mathcal{L} \; \mathfrak{D}^{ba} T_\epsilon \\
    &= \sum_b A^b_{\tau+\epsilon} \;\mathfrak{D}^{ba} T_\epsilon \,.\label{eq:adjoint_state_intermediate}
\end{align}
The differential \eqref{eq:so3_jacobian} of the time evolution operator is given by:
\begin{align}
    \mathfrak{D}^{ba} T_\epsilon &= \langle T^b, \nabla_{Z_\tau}^a T_\epsilon \; T_\epsilon^\top \rangle  \\
    &= \langle T^b, \nabla_{Z_\tau}^a(e^{\epsilon V_\theta(Z_\tau)} Z_\tau) \; Z_\tau^\top e^{-\epsilon V_\theta(Z_\tau)} \rangle \,, 
\end{align}
where we have used that $V_\theta \in \mathfrak{so}(3)$ is antisymmetric. We now expand this expression up to quadratic order in the step size $\epsilon$ to obtain
\begin{align}
   \mathfrak{D}^{ba} T_\epsilon &=  \langle T^b, \nabla_{Z_\tau}^a(\;(\mathbb{I}+\epsilon V_\theta(Z_\tau)) \, Z_\tau) \; Z_\tau^\top (\mathbb{I}-\epsilon V_\theta(Z_\tau)) \,\rangle + \mathcal{O}(\epsilon^2) \\
   &= \delta^{ba} + \epsilon \langle T^b, [V_\theta, T^a]\rangle + \epsilon \langle T^b , \nabla^a_{Z_\tau} V_\theta(Z_\tau)\rangle + \mathcal{O}(\epsilon^2)   
\end{align}
We now use the fact that
\begin{align}
    \langle T^b, [V_\theta, T^a]\rangle 
    = -\frac{1}{2} \textrm{tr} \, T^b \left(V_\theta T^a - T^a V_\theta \right)
    = -\frac{1}{2} \textrm{tr} \, T^a \left(T^b V_\theta - V_\theta T^b\right) =  \langle T^a, [T^b, V_\theta] \rangle \,.
\end{align}
Expanding $V_\theta = \sum_c V_\theta^c T^c$ and similarly for $[T^b, V_\theta] \in \mathfrak{so}(3)$, we then obtain that the differential is given by
\begin{align}
    \mathfrak{D}^{ba} T_\epsilon &=  \delta^{ba} - \epsilon [V_\theta, T^b]^a + \epsilon \nabla^a_{Z_\tau} V_\theta(Z_\tau)^b + \mathcal{O}(\epsilon^2) 
\end{align}
Plugging this result into the chain rule for the adjoint state \eqref{eq:adjoint_state_intermediate}, we obtain
\begin{align}
     A^a_\tau = A^a_{\tau+\epsilon} - \epsilon [V_\theta, A_{\tau+\epsilon}]^a + \epsilon \sum_b A^b_{\tau+\epsilon} \, \nabla^a_{Z_\tau} V_\theta(Z_\tau)^b + \mathcal{O}(\epsilon^2) \,.
\end{align}
We now rearrange this to isolate the finite time difference on the right-hand-side
\begin{align}
    \tfrac{1}{\epsilon} \left( A^a_{\tau + \epsilon} - A^a_\tau \right) = [V_\theta, A_{\tau+\epsilon}]^a -  \sum_b A^b_{\tau+\epsilon} \, \nabla^a_{Z_\tau} V_\theta(Z_\tau)^b + \mathcal{O}(\epsilon) \,.
\end{align}
Taking the limit $\epsilon \to 0$ gives
\begin{align}
    \frac{d}{d\tau} A^a_\tau = [V_\theta, A_{\tau}]^a -  \sum_b A^b_{\tau} \, \nabla^a_{Z_\tau} V_\theta(Z_\tau)^b \,.
\end{align}
Multiplying by the generator $T^a$ and summing over the index $a$, we then obtain:
\begin{align}
    \frac{d}{d\tau} A_\tau = [V_\theta, A_{\tau}] -  \sum_b A^b_{\tau} \, \nabla_{Z_\tau} V_\theta(Z_\tau)^b \,,
\end{align}
which is the claimed time-evolution equation of the theorem.

\section{Lightning Review of Differential Geometry and Lie Groups} \label{app:sec:gradientdescentonliegroups}
\paragraph{Right multiplication: }Lie groups are smooth manifolds endowed by an additional group multiplication. There is therefore a natural diffeomorphism given by right multiplication
\begin{align}
    R_g: G \to G\,, && R_g: h \mapsto h g \,,
\end{align}
for $h,g \in G$. Let us denote the unit element of the group by $e$. Since right multiplication $R_g$ is a diffeomorphism, its differential 
$$(dR_g)_e: T_e G \to T_g G $$ 
is a linear isomorphism (linear bijective map). We can therefore uniquely identify tangent vectors $v \in T_g G$ and Lie algebra elements $\tilde{v} \in \mathfrak{g} \simeq T_eG$. 

\paragraph{Matrix groups and right multiplication:} For matrix groups, the differential $(dR_g)_e: T_e G \to T_gG$ takes a particularly simple form
\begin{align}
    (dR_g)_e v = v g \label{eq:right_isomorphism}
\end{align}
where we take the standard matrix product of the Lie algebra element $v$ and the group element $g$ on the right hand side. This statement can be easily checked by noticing that the differential acts on any function $f: G \to \mathbb{R}$ by
\begin{align}
    [(dR_g)_e v]f = \frac{d}{dt} f (\underbrace{\gamma_v(t) g}_{\equiv \tilde{\gamma}(t)})  \bigg|_{t=0} \,,
\end{align}
where $\gamma_v$ denotes the curve associated with $v \in T_e G$, i.e. $\gamma_v(0)=e$ and $\frac{d}{dt} \gamma_v(t)|_{t=0} = v$. 
As a result, the curve $\tilde{\gamma}$ obeys 
\begin{align}
    \tilde{\gamma}(0) = g \,, &&
    \frac{d}{dt} \tilde{\gamma}(t) |_{t=0} = \frac{d}{dt} \gamma(t) |_{t=0} \, g = v g \,,
\end{align}
and is therefore a curve associated with $v g \in T_gG$ as claimed.

\paragraph{Killing form:} we first define the adjoint map
\begin{align}
    \textrm{ad}_u(v) = [u, v]
\end{align}
where $u,v \in \mathfrak{g}$. The adjoint map is linear and a Lie algebra homomorphism 
\begin{align}
    \textrm{ad}_{[u,v]} = [\textrm{ad}_u, \textrm{ad}_v] \,.
\end{align}
Then the Killing form is given by
\begin{align}
    B(u, v) = \textrm{Tr}(\textrm{ad}_u \circ \textrm{ad}_v)
\end{align}
for $u,v \in \mathfrak{g}$ is a bilinear symmetric form that is non-degenerate if the group is semi-simple and compact. We can use the Killing form to define a inner product for $T_eG$ by
\begin{align}
    \langle u, v \rangle_e = -B(u, v) \,.
\end{align}
The negative sign ensures positive definiteness.

\paragraph{Riemannian metric:} Often it is more convenient to work with the Lie algebra instead of the various tangent spaces. For example, an inner product $\langle \cdot, \cdot \rangle_e$ on the Lie algebra induces a Riemannian inner product on the group $G$ by 
\begin{align}
    \langle v, w \rangle_g \equiv \langle (dR_{g^{-1}})_g v, (dR_{g^{-1}})_g w \rangle_e = \langle v g^{-1}, w g^{-1} \rangle_e \,, && \text{for} \; v, w \in T_gG \,,
\end{align}
where we have used that $dR_{g^{-1}} = (dR_g)^{-1}$ to pull tangent vectors back to the Lie algebra. Using $(dR_g)^{-1} = dR_{g^{-1}}$, it immediately follows that the inner product is right-invariant, i.e., 
\begin{align}
   \langle (dR_g)_e \xi, (dR_g)_e \eta \rangle_g = \langle \xi, \eta \rangle_e \label{eq:rightinv}
\end{align}
for Lie algebra elements $\xi, \eta \in T_eG$. 
For $SO(3)$, we choose antisymmetric generators $T^a$ that are orthonormal with respect to the inner product induced by the Killing form $\langle T^a, T^b \rangle = \frac12 \textrm{tr} (T^a)^\top T^b = \delta^{ab}$.

\paragraph{Riemannian gradient:} Recall that the Riemannian gradient $\textrm{grad}_g f \in T_g G$ is the the unique element of the tangent space which obeys
\begin{align}
    \langle \textrm{grad}_g f, v\rangle_g = df_g \, v
\end{align}
for all $v \in T_g G$. Due to the isomorphism between the tangent space $T_g G$ and the Lie algebra $\mathfrak{g} \simeq T_eG$, the Riemannian gradient  $\textrm{grad}_g f \in T_gG$ uniquely corresponds to the Lie algebra element 
\begin{align}
\nabla f \equiv (dR_{g^{-1}})_e \, \textrm{grad}_g f \in \mathfrak{g} \,. \label{eq:liealgebragradient}
\end{align}
It is often more convenient to work with this Lie algebra representative of the Riemannian gradient. We will now derive a simple expression for this Lie algebra gradient. By definition of the Riemannian gradient, it holds that
\begin{align}
    \langle \textrm{grad}_g f, (dR_g)_e v \rangle_g = df_g \left((dR_g)_ev \right) \,, \label{eq:gradlie}
\end{align}
for any $v \in \mathfrak{g}$. We can invert \eqref{eq:liealgebragradient} to obtain
\begin{align}
    \textrm{grad}_g f = (dR_g)_e \nabla f  \label{eq:liegrad2}
\end{align}
Using this, we can rewrite \eqref{eq:gradlie} as follows
\begin{align}
    \langle (dR_g)_e \nabla f, (dR_g)_e v \rangle_g = \langle \nabla f, v \rangle_e \,,
\end{align}
where we have used the right-invariance of the metric \eqref{eq:rightinv}.
We thus conclude that
\begin{align}
  \langle \nabla f, v \rangle_e = df_g \left((dR_g)_ev \right) 
\end{align}
By definition of the right hand side, it therefore holds for a matrix group that
\begin{align}
    \langle \nabla f, v \rangle_e = df_g (v g) =  \frac{d}{dt} f(\exp(t v) g) \bigg|_{t=0} \,. \label{eq:prev}
\end{align}
We can expand the Lie algebra representative in terms of generators $\nabla f = \sum_a T^a \nabla^a f$ whose components are given by
\begin{align}
    \nabla^a f = \langle \nabla f, T^a \rangle_e =  \frac{d}{dt} f(\exp(t T^a) g) \bigg|_{t=0} \,,
\end{align}
where we have used \eqref{eq:prev} and assumed that the generators are chosen to be orthonormal, i.e., $\langle T^a, T^b \rangle_e = \delta^{ab}$. For brevity, we typically refer to the Lie algebra representative of the Riemannian gradient $\nabla f$  as simply the Riemannian gradient in the main part of the paper.

\paragraph{Exponential map:} For a Lie group, the exponential map takes the form
\begin{align}
    \exp_g(v) = \exp( (dR_{g^{-1}})_g v) \, g
\end{align}
for $g \in G$ and $v \in T_gG$ where $\exp: \mathfrak{g} \to G$ is the matrix exponential for matrix Lie groups. We note that, by the Lie algebra expression of the Riemannian gradient \eqref{eq:liegrad2}, this implies that gradient descent on a Lie group with learning rate $\lambda \in \mathbb{R}$ and loss $\mathcal{L}: G \to \mathbb{R}$ is given by
\begin{align}
    g^{i+1} = \exp( -\lambda \nabla \mathcal{L}) g^{i} \,.
\end{align}
This result is used extensively in the main part of the paper.

\paragraph{Differential:} Right multiplication $R_g: G \to G$ induces a isomorphism between the tangent spaces $T_gG$ and the Lie algebra $\mathfrak{g}$. Consider the differential $df_g: T_gG \to T_{f(g)}G$ of the map $f:G\to G$. Using the right multiplicative isomorphism \eqref{eq:right_isomorphism}, we can write any tangent space element in terms of a Lie algebra element
\begin{align}
    T_gG \ni \Omega G \leftrightarrow \omega \in \mathfrak{g}
\end{align}
Specifically, we can define a Lie algebra representative of the differential
\begin{align}
    \mathfrak{D} f_g \equiv d(R_{f(g)^{-1}}\circ f_g \circ R_g)_e: T_e G \simeq \mathfrak{g} \to T_eG \simeq \mathfrak{g}
\end{align}
Using the fact that the right isomorphism amounts to right multiplication \eqref{eq:right_isomorphism}, we can derive a explicit expression for this representative
\begin{align}
    \mathfrak{D} f_g(\omega) &= df_g(\omega g) f(g)^\top \\
    &= \frac{d}{dt} f(\exp(t \omega)g) \big|_{t=0} f(g)^\top \\
    &= \frac{df}{dg}(\omega g) f(g)^\top \\
    &= \sum_a \omega_a \frac{df}{dg}(T^a g) f(g)^\top \\
    &= \sum_a \omega_a \frac{d}{dt} f(\exp(t T^a g)\big|_{t=0} f(g)^\top \\
    &=\sum_a \omega_a \nabla^a f(g) \; f(g)^\top \,.
\end{align}
Since $\mathfrak{D}f_g(\omega) \in \mathfrak{g}$, we can expand it as $\mathfrak{D}f_g(\omega) = \sum_b T^b \mathfrak{D}^b f_g(\omega)$. Hence, it holds that
\begin{align}
    \mathfrak{D}^bf_g(\omega) = \sum_a \langle T^b, \nabla^a f(g) \; f(g)^\top \rangle \, \omega_a \equiv \sum_a \mathfrak{D}f^{ba}_g \, \omega_a \,,
\end{align}
with 
\begin{align}
    \mathfrak{D}f^{ba}_g \equiv \langle T^b, \nabla^a f(g) \; f(g)^\top \rangle \,.\label{eq:so3_jacobian}
\end{align}
We can think of $\mathfrak{D}f^{ba}_g$ as being the matrix representation of the differential in the generator basis $T^a$.

\paragraph{Chain rule:} Let's also consider a function $F: G \to \mathbb{R}$ and its composition $F \circ f: G \to \mathbb{R}$. Then it holds that
\begin{align}
    \nabla^a (F \circ f)_g = \sum_b \nabla^b F_{f(g)} \, \mathfrak{D}f^{ba}_g \,.\label{eq:chain_rule_lie_algebra}
\end{align}
This follows directly from 
\begin{align}
    F \circ f \circ R_g = F \circ R_g \circ R_{g^{-1}} \circ f \circ R_g \,.
\end{align}
Taking the differential of this and using the standard chain rule of the differential then gives
\begin{align}
    d( F \circ R_{f(g)} \circ R_{f(g)^{-1}} \circ f \circ R_g)_e = d(F \circ R_{f(g)})_e \, d(R_{f(g)^{-1}} \circ f \circ R_g)_e = \nabla F_{f(g)}  \; \mathfrak{D}f_g \,.
\end{align}


\section{SO(3) Conventions}
Any group element $g \in SO(3)$ can be written as
\begin{align}
    g = \exp( A)
\end{align}
with $A \in \mathfrak{so}(3)$ taking value in the Lie algebra
\begin{align}
    \mathfrak{so}(3) = \{ A^\intercal = - A \, | \, A \in \mathbb{R}^{3,3}  \} \,.
\end{align}
A Lie algebra is, in particular, a vector space. We will choose the basis $T^a$ with $a\in\{1, 2, 3\}$ with
\begin{align}
    T^1 = \begin{bmatrix}
0 & 0 & 0\\
0 & 0 & -1 \\
0 & 1 & 0
\end{bmatrix} \,, &&  T^2 = \begin{bmatrix}
0 & 0 & 1\\
0 & 0 & 0 \\
-1 & 0 & 0
\end{bmatrix}  \,, && T^3 = \begin{bmatrix}
0 & -1 & 0\\
1 & 0 & 0 \\
0 & 0 & 0
\end{bmatrix} \,.
\end{align}
In the context of Lie theory, the basis vectors $T^a$ are also referred to as the generators of the Lie algebra. In particular, any antisymmetric three-by-three matrix can be written as a linear combination of the generators. Furthermore, the generators obey the following commutation relations
\begin{align}
    [T^a, T^b] = \sum_{c=1}^3 \epsilon^{abc} T^c \,,
\end{align}
where  $\epsilon^{abc}$ denotes Levi-Civita symbol. It can be checked that 
\begin{align}
    \Tr (T^a) &= 0 \,, \\
    \Tr (T^a T^b) &= - 2 \delta^{ab} \,, \label{eq:orthogonality} 
\end{align}
where $\delta^{ab}$ is the Kronecker symbol. Using \eqref{eq:orthogonality}, we can equip the Lie algebra with an inner product
\begin{align}
    \langle A, B \rangle = \frac{1}{2} \Tr (A^\intercal B) = \sum_{a=1}^3 A^a B^a \,,
\end{align}    
where we have used that we can express an arbitrary element $A$ of the Lie algebra $\mathfrak{so}(3)$ as $A = \sum_{a=1}^3 A^a T^a$. Note that our basis $T^a$ is orthonormal with respect to this inner product. 

\section{Limitations of Guidance}\label{app:limitations_of_guidance}
Guidance is a technique to bias the generation process, i.e., instead of sampling from the model distribution $p$, one aims to sample from
\begin{align}
    p_{\textrm{tilted}}(x) = p(x) \frac1{Z_{\textrm{guide}}}e^{-\beta E_{\textrm{guide}}(x)}
\end{align}
where $E_\textrm{guide}$ denotes the guidance energy of interest. For score based models, this amounts to a modification of the score
\begin{align}
    \nabla_x \log p_{\textrm{tilted}}(x) = \nabla_x \log p(x) - \beta \,\nabla_x E_{\textrm{guide}}(x) \,. 
\end{align}
A challenging aspect of guidance is diffusion involves a one-parameter family of density which is typically parameterized in terms of diffusion time $t$ or noise level $\sigma(t)$. For many guidance energies, such as the Rosetta energy function, it is highly non-trivial to calculate the score of the guidance density at time $t$,
\begin{align}
 p_{\textrm{guide}}(x_t) = \frac1{Z_{\textrm{guide}}} \int p(x_0 | x_t) e^{-\beta E_{\textrm{guide}}(x_0)} dx_0 \,,
\end{align}
as it involves an untractable marginalization over the unnoisy datasample $x_0$. Here, $p(x_0|x_t)$ denotes the backward kernel, i.e. the density of obtaining the unnoisy data sample $x_0$ when integrating the reverse SDE starting from $x_t$ at diffusion time $t$. The fundamental challenge of all guidance schemes is to approximate the above marginalization in a suitable manner for the application at hand. This approximation becomes particularly challenging in the limit of $\beta \to \infty$ which corresponds to sampling minimizers of the guidance energy $E_{\textrm{guide}}$. In this limit, the integrand is heavily dominated by the minimizers (and its neighborhoods for very large but finite $\beta$ as used in numerical experiments). As such, one has to estimate the conditional density $p(x^*|x_t)$ for the minimizers $x^*$ with high precision. For this reason, guidance cannot be expected to work well for minimizing generic guidance energies.

Let us consider the simpler case in which we do not want to minimize the guidance energy, i.e., we do not consider the challenging $\beta \to \infty$ limit. In this case, there are various strategies to approximate the marginalization. None of them are perfect, and each comes with a certain set of tradeoffs. We will only focus on methods which work for a pretrained unconditional model as this is the relevant setting for diffeomorphic optimization. In principle, one could use Monte-Carlo to estimate the guided score
\begin{align}
    \nabla_{x_t} \log p_{\textrm{guide}}(x_t) \approx \nabla_{x_t} \log \frac{1}{N} \sum_{i=1}^N e^{-\beta E_{\textrm{guide}}(x_0^{(i)}(x_t))}
\end{align}
with $x^{(i)}_0 \sim p(x_0|x_t)$. For this, we have to repeatedly integrate the reverse SDE starting from $x_t$ and backpropagate through it. There are methods for refined backpropagation through SDEs, such as suitable generalizations of the adjoint state method\cite{kidger2021efficient, li2020scalable} but we are not aware of any work in the protein space applying these techniques. This is possibly due to the considerable technical difficulty, such as the appearance of Stratonovich stochastic integrals, as well as scaling concerns to larger problem sizes. Furthermore, one would need to derive generalizations of these methods for matrix Lie groups.

Universal guidance~\cite{bansal2023universal} is a technique inspired by classifier guidance~\cite{dhariwal2021diffusion}. In this setting the marginalization is approximated by a zeroth-order saddle point approximation around the expectation value $\hat{x}_0(x_t) = \mathbb{E}[x_0 | x_t] $ of the kernel $p(x_0|x_t)$, i.e., 
\begin{align}
 p_{\textrm{guide}}(x_t) \approx \frac1{Z_{\textrm{guide}}} \int \delta(\hat{x}_0(x_t)-x_0) \,  e^{-\beta E_{\textrm{guide}}(x_0)} dx_0 = \frac1{Z_{\textrm{guide}}} e^{-\beta E_{\textrm{guide}}(\hat{x}_0(x_t) )} \,,
\end{align}
which can be a good approximation at low noise levels (small diffusion time) but tends to be very poor at high noise. To alleviate this problem, guidance is often combined with a heuristic schedule which switches the guidance score on when the diffusion time is lower than some threshold (often in an adiabatic manner). However, this results in a fundamental tension: we want the guidance signal to kick in as early as possible in the generation process to meaningfully change the selected mode of the sample. On the other hand, we do not want to start the guidance too early as the zeroth-order saddle point approximation would be very poor. Whether such a delicate balance can be struck often heavily depends on the problem setting and requires careful (and costly) hyperparameter tuning. As a general rule of thumb, energy functions that depend only on the coarse grain details of the sample $x_0$ seem to be easier to deal with. Examples include the classification of image types or the sentiment of a sentence. However, energy functions in physics (which are highly relevant in the protein space) are typically not of this type. For example, the Rosetta energy is extremely sensitive to the repulsive component of the atomic interactions resulting in numerically destabilizing energy values even if the global fold and secondary structure of the predicted unnoisy protein is roughly correct.

Recently, there has been considerable interest in combining diffusion models with Sequential Monte Carlo (SMC) to facilitate sampling from the tilted distribution $p_{\textrm{tilted}}$. We base our discussion on the very recent \cite{singhal2025general} which introduces Feynman-Kac steering unifying a number of previous approaches in a common theoretical framework. The basic idea is to modify the transition kernel $p(x_{t-1}|x_t)$ of the reverse diffusion
\begin{align}
    p(x_{t}|x_{t+1}) \to p(x_{t}|x_{t+1}) G(x_1, \dots, x_{t}) \label{eq:tilted_trans_kernel}
\end{align}
by a guidance potential $G$ which fullfills the consistency condition
\begin{align}
    \prod_{t=1}^0 G(x_1, \dots, x_{t}) = \frac1{Z_{\textrm{guide}}}e^{-\beta E_{\textrm{guide}}(x_0)} \,.
\end{align}
This ensures that the joint distribution of the sampling trajectory $(x_1, \dots, x_t, \dots x_0)$ is given by
\begin{align}
     p_{\textrm{guide}}(x_1, \dots, x_0) = p(x_1, \dots, x_0) \frac1{Z_{\textrm{guide}}}e^{-\beta E_{\textrm{guide}}(x_0)}
\end{align}
and thus has the desired marginal $p_{\textrm{guide}}(x_0)$. In principle, any guidance potential satisfying the consistency condition will lead to asymptotic gurantees but in practice a careful choice of potential is vital to avoid prohibitive variance. This is because, at each step of the reverse diffusion one uses importance sampling to reweight the sample $x_t$ according to the tilted transition kernel \eqref{eq:tilted_trans_kernel}. In practice, one often has to choose guidance potentials $G$ that involve challenging marginalization. For example, a widely used guidance potential is the expected guidance energy $E_{guide}$. Thus this approach faces analogous challenges to universal guidance, particularly in the $\beta \to \infty$ limit.

Another important point of differentiation between guidance and diffeomorphic optimization is that the latter is particularly suited in applications in which we want to improve a particular sample $x_0$ with respect to some cost function. For guidance, we would need to incorporate a constraint in the guidance potential that ensures similarity to $x_0$. For diffeomorphic optimization, we perform gradient descent updates that iteratively refine the sample. 

\section{Complexity and Efficiency of Gradient Estimations through Flows}
A forward pass evaluates the network layer by layer, applying linear transformations and nonlinearities to propagate an input through the model. Its cost is determined by the size and number of layers, with matrix multiplications dominating the total FLOPs. 
A general backward pass is more expensive because it must compute gradients with respect to all parameters. 
This involves propagating gradients backward through each layer and forming gradient tensors for every weight matrix in each layer and the layers input for further backward propagation. These extra operations roughly double the compute relative to the forward pass, and they also require storing all intermediate activations, which increases memory consumption and can become a limiting factor on modern hardware.

A backward pass that computes gradients only with respect to the input is cheaper. 
It is comparable to a single backward pass as the parameter gradients do not have to be computed.
The computational effort becomes close to that of the forward pass, with only a small overhead for the backward sweep. 

Gradient checkpointing reduces memory consumption during backpropagation by storing only a subset of intermediate activations from the forward pass and recomputing the missing ones when needed. 
In standard backpropagation, all activations must be kept in memory because each layer’s gradient depends on its forward output; this creates a memory footprint that grows linearly with depth and often becomes the bottleneck in training large models. 
Gradient checkpointing breaks the computation graph into segments, saves only the boundary activations, and discards the rest. 
During the backward pass, the discarded activations are recomputed on the fly by running partial forward passes within each segment. 
This trades additional compute for a substantial reduction in memory usage. 
The memory cost can drop from linear to roughly the square root of the number of layers in optimal schemes, enabling deeper networks or larger batch sizes on the same hardware. 
The extra compute cost is typically a factor of 1.5–2×, but the trade-off is advantageous when memory is the limiting resource. Modern implementations generalize this idea through customizable checkpoint policies, selective recomputation, and integration with automatic differentiation frameworks, making gradient checkpointing a standard technique in large-scale training, especially for transformer architectures and diffusion models where activation tensors dominate memory load.

ODE solvers compute trajectories by applying numerical integration steps, with cost driven by repeated evaluations of the vector field. Higher-order or adaptive methods improve accuracy but increase per-step compute, so total complexity depends on both the solver order and the number of required steps. In neural ODEs, most of the expense comes from evaluating the neural network that defines the dynamics.
Using the autograd engine to compute gradients through a ODE solution keeps the entire computational graph in memory, resulting in $\mathcal{O}(NL)$ scaling with a second pass over the graph in reverse direction required for the backward pass with a total cost of $\mathcal{O}(NL)$.

The adjoint method recomputes the state trajectoy and backpropagates the gradients by integrating a separate reverse-time ODE, avoiding storage of intermediate states and keeping memory use low. 
Compared to gradient checkpointing it does not require storing any values besides the vectorfield as it recomputes both the trajectory and the gradients during the backward pass.
This leads to a memory complexity of $\mathcal{O}(L)$ as only the network parameters and their activations need to kept in memory at any time.
This comes at the price of additional solver calls and potential numerical instability or even divergence from the trajetory of the forward pass, since the backward integration can amplify errors and often requires tighter tolerances.
After $\mathcal{O}(NL)$ steps of compute in the forward pass, the adjoint method recomputes the forward pass to reconstruct the trajectory as well as one backward pass to compute the gradient, resulting in an additional $\mathcal{O}(2NL)$ compute complexity during the backward pass. 
As a result, adjoint-based gradients are memory-efficient $\mathcal{O}(L)$ but computationally more expensive with $\mathcal{O}(3NL)$.
Solving a reverse time ODE comes with potential numerical issues as the recomputed trajectories diverge and subsequently lead to diverging gradients.

Checkpointing provides a middle ground by storing selected solver states and recomputing segments during backpropagation. 
This reduces memory usage to $S = N/K$ at any given time where $S$ is the segment length.
Storing $K$ checkpoints in the form of a state of the ODE is negligible as it amounts to $K$ samples.
For $K$ checkpoints, we only have to solve $S$ steps at any one time, while the recomputation between the $K$ checkpoints amounts to going forward and backward on the segments compute complexity of $\mathcal{O}(2K S L) = \mathcal{O}(2NL)$.
It balances compute and memory more predictably, making it a practical choice for many neural ODE applications.

To quantitatively measure the performance of both the gradient checkpointing and adjoint method, we evaluated each gradient against the ground truth gradient obtained with numerical differentiation.
Figure~\ref{fig:autogradvsadjoint} shows the gradient error over increasing modelling dimensions as a funtion of the step size and the checkpointing interval. 
For the Euclidean part, we integrated the function $dx_t = \cos(t/2\pi) \cdot x_t$ and for the $SO(3)$ part, we integrated $dR_t = \cos(t/2\pi) \sum_a^3 v_a T^a \theta(v)$ with $v = [1.0, -0.2, 0.1]^T$ and $\theta(v) = \|v\|$. The initial condition was sampled from a Gaussian distribution in Euclidean space and the respective axis-angle space for the $SO(3)$ elements.
The dimensionality was determined by the number of $SE(3)$ ODE's integrated in parallel for $t\in[0,1]$.
We used this toy ODE's to establish the numerical behavior of the autograd checkpointing and the adjoint methods for backpropagating gradients through ODE solvers.

The analysis suggests that the adjoint method with a checkpointing at every step yields gradients very close to its autograd gradient.
Decreasing the checkpointing frequency requires recomputing longer parts of the trajectory which accumulates errors.
Overall a trend emerges that decreasing the step size is beneficial for both adjoint and autograd gradients, and particularly for very small stepsizes, the difference reduces significantely.

\begin{figure}[h!]
  \centering
\includegraphics[width=0.8\linewidth]{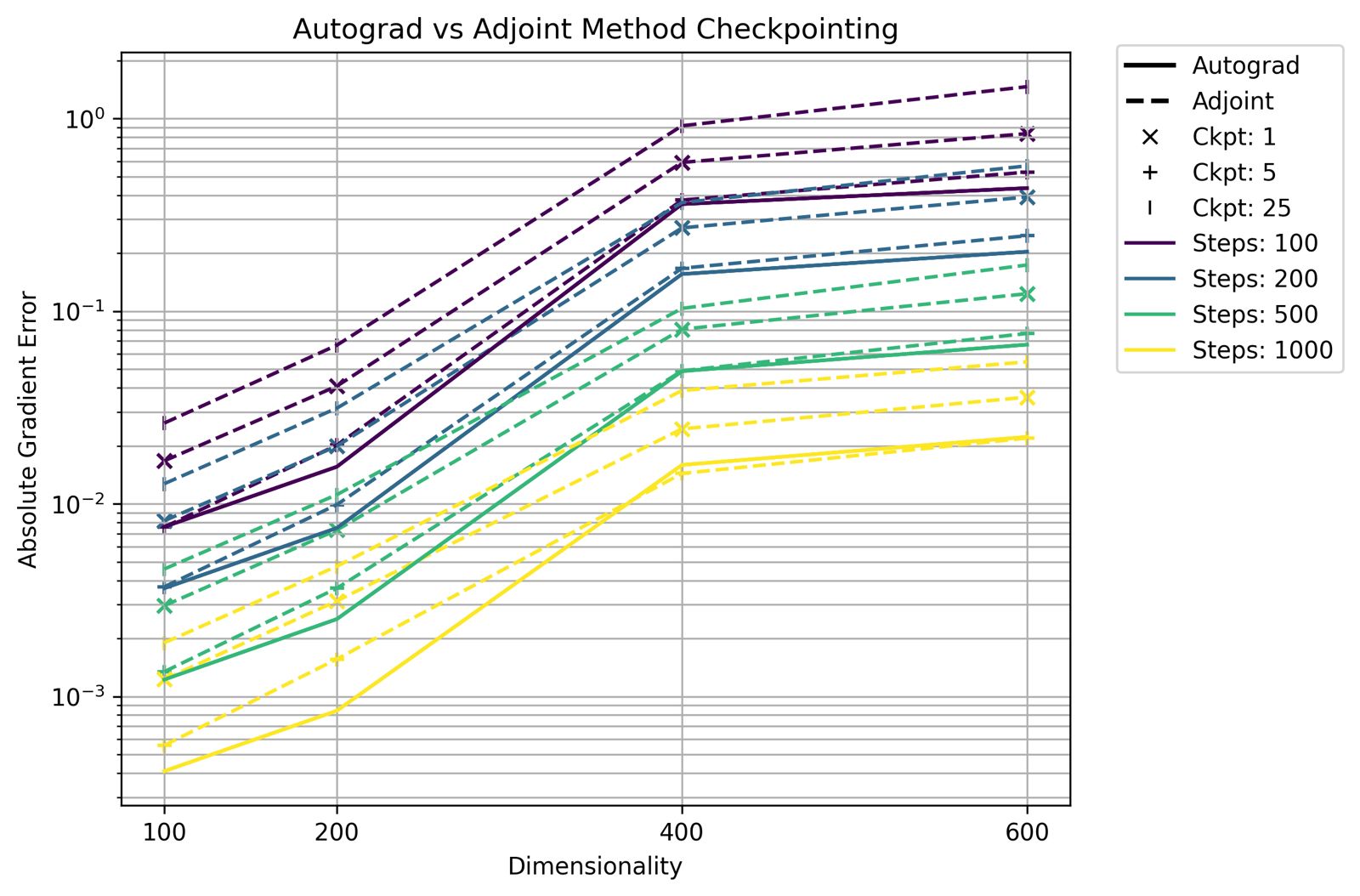}
  \caption{Gradient Error as a function of dimensionality and the number of steps}
  \label{fig:autogradvsadjoint}
\end{figure}

We also extended this analysis to the FrameFlow model \cite{frameflow} model but due to the computational cost of numerical differentiation for the ground truth gradient refrained from evaluating multiple checkpointing frequencies.
For this reason we kept the checkpointing frequency fixed at every five steps.
One can observe that the adjoint gradients backpropagated through the $SE(3)$ ODE generally more imprecise than their autograd counterparts.
For commonly used stepsizes of $1/25$ to $1/100$, the total deviation can be quite significant, but correcting for the per dimension gradient deviation, the gradient error is small.

\begin{figure}[h!]
  \centering
\includegraphics[width=0.8\linewidth]{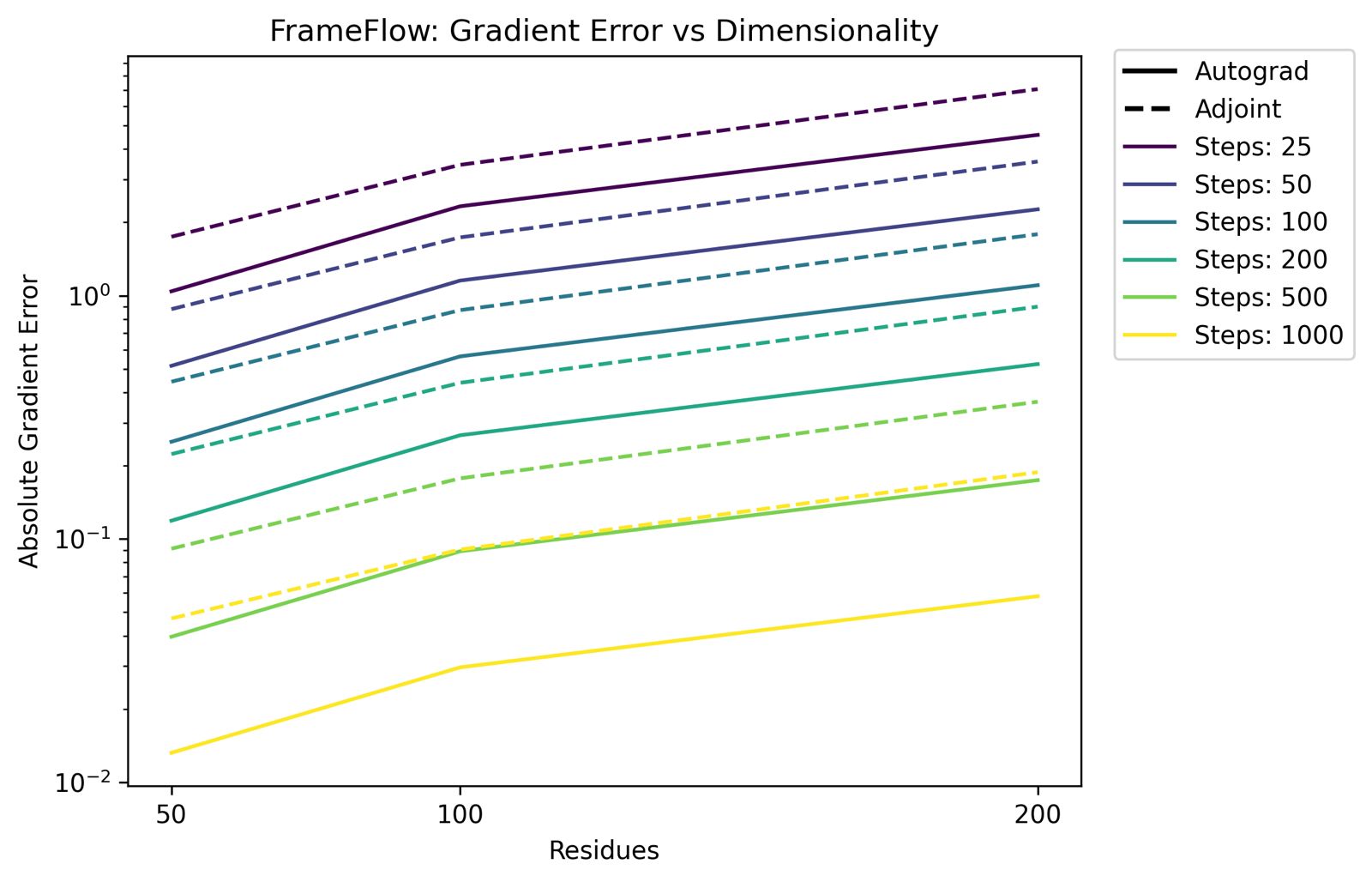}
  \caption{Gradient Error as a function of dimensionality and the number of steps}
  \label{fig:autogradvsadjoint}
\end{figure}

Finally, we compare the gradient error of the AlphaFlow architecture \cite{alphaflow}.
Again, adjoint gradient estimators perform worse than their autograd counterparts.
It should be noted that AlphaFlow is a flow defined on the $C_\beta$ carbon atoms and does not model a rotation group, compared to the previous two models.
We want to note that the evaluation of these gradients were on the order of hours, particularly for the highest step sizes.
\begin{figure}[h!]
  \centering
\includegraphics[width=0.8\linewidth]{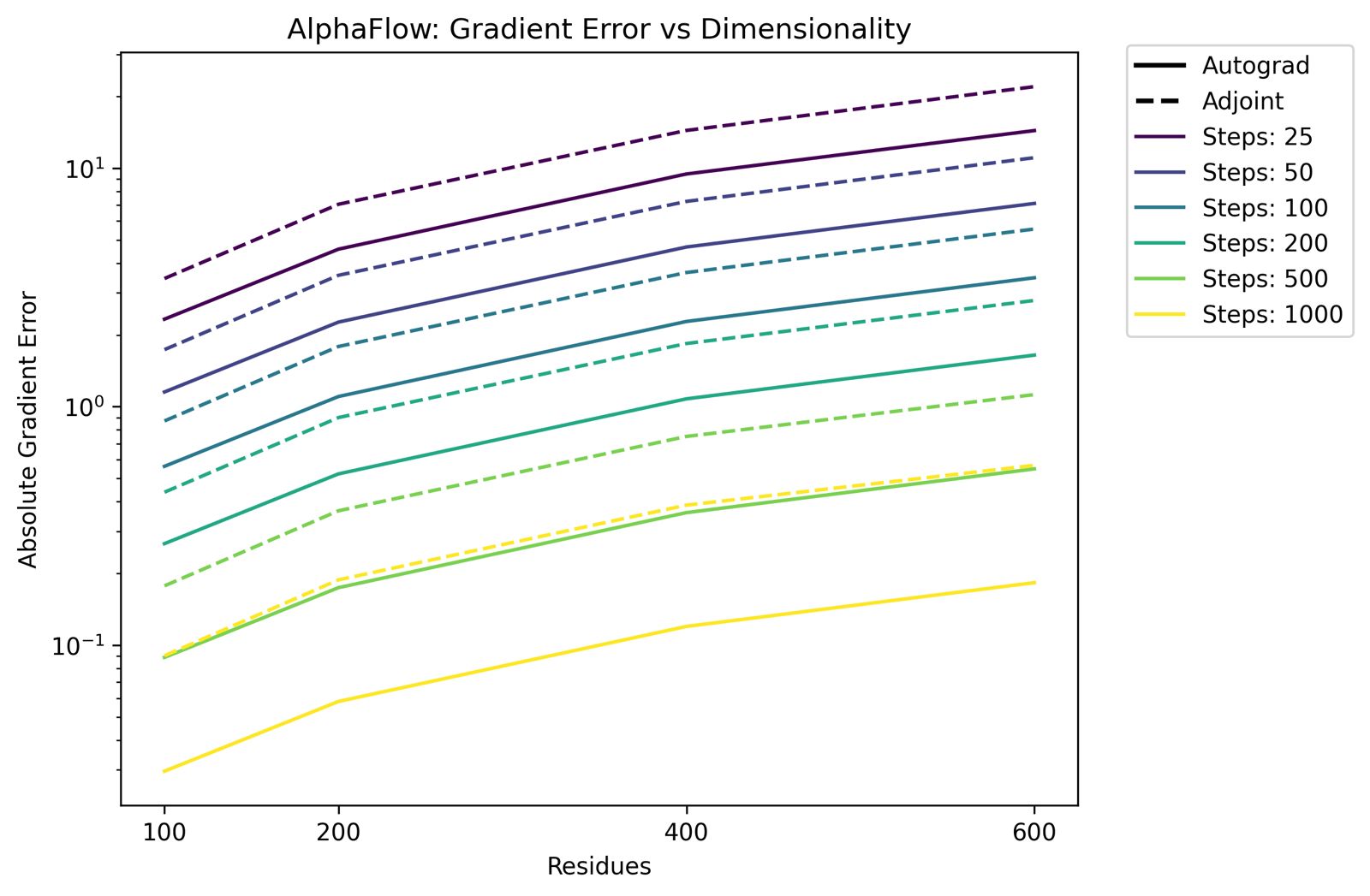}
  \caption{Gradient Error as a function of dimensionality and the number of steps}
  \label{fig:autogradvsadjoint}
\end{figure}

\section{Experiments}
\label{app:experiments}

\subsection{SO(3) Manifolds}

We generate a manifold in the space of $SO(3)$ matrices.
To plot the rotation matrices easily, we parameterize the SO(3) elements in their axis-angle vectors. The manifold in the axis-angle representation was generated with the S-curve function of sklearn, \verb+sklearn.datasets.make_s_curve(n_samples=250000, noise=0.1)+.

The flow matching model was parameterized with a four layer MLP with SELU activation functions and 64 hidden units in each of the hidden layers.
For training, the model was optimized with Adam using a learning rate of $1e^{-3}$ over 20 epochs on the training data with conditional flow matching to enforce that the predicted flow on the manifold of SO(3) matrices closely matches the target logarithmic map on the manifold.
During sampling, the vector field was integrated with the Crouch-Groussmann integrator with a step size of $dt=1/200$ from $0$ to $1$.

For the sample optimization, a target sample $z^*$ was experimentally determined and a second random sample was drawn as the initial value $\smash{x^{(0)} = g(z^{(0)})}$ of the optimization problem.
The mean squared error (MSE) between the target sample $x^*$ and the sample $x^{(0)}$ being optimized served as the loss function $\mathcal{L}$.
The sample $z^{(i)}$ was then optimized by calculating the gradient $d\mathcal{L}(g(z^{(i))}) / dz^{(i)}$ and performing 20 steps of gradient descent in the $SO(3)$ base space with a base space learning rate of $0.1$. 

The tangent space was calculated by evaluating the Jacobian matrix $J_g$ with \verb+torch.nn.functional.jacobian+ at point $z$ and its corresponding target space point $x = g(z)$.
Subsequently, the two largest eigenvalues with the corresponding left columns were computed with a SVD and plotted with a scaling of 1.5 the point $x$.

\subsection{Secondary Structure Optimization}
\label{app:experiments_secondary}

The secondary structure experiments were conducted on top of the FrameFlow codebase \cite{frameflow}.
For training and architectural details, we refer to the manuscript.

We used the model parameters inn the provided model checkpoints by the original authors.
The samples $x$ with the time index $t \in [0,1]$ consist of backbone elements of each residue centered on the Ca carbon atom.
Each of the $L$ backbone elements is parameterized by its translation and rotation and are elements of the SE(3) group.
The flow generates an element $(t, R)$ of the $SE(3)$ group for each residue.
Upon completion of the sampling process, these transformations are applied to the idealized backbone coordinates.

For the experiment described in the main part of the paper, the loss function $\mathcal{L}$ was calculated as a function of the distance between the atom positions in the first and the last residue.
The distance loss was implemented by taking the ReLU function over the difference between the desirable target distance and the actual distance, i.e. 
$\mathcal{L} = \left(\max (0, d^* - d(x^{(i)}_{0}, x^{(i)}_{L}) )\right)^2$ where the target distance $d^*$ between the atom positions in the two termini $x^{(i)}_0$ and $x^{(i)}_L$ in the backbone structure of length $L$ at optimization step $(i)$ of the structure was chosen as $d^*=L/2$.
The distance metric was chosen as the mean squared error $d(x, y) = \sqrt{\sum_j (x_j - y_j)^2}$ over the atoms $j$ in two residues $x$ and $y$.
The order of subtraction $\max(0,d^* - d)$ ensures a loss that becomes zero if $d \geq d^*$ and that the two termini are repelled from each other.
We also experimented with an attraction loss in the sense of $\max(0, d - d^*)$ which worked equaly well.
In fact, we were able to dynamically switch losses during optimization and the termini were pulled apart and together smoothly as determined by the loss with all intermediate optimization states $x^{(i)}$ staying on the data manifold.
When switching between attracting and repelling loss function it was beneficial to reset the momenta of the adaptive optimizers like Adam to accelerate the optimization.
For sampling we integrated with the Euler integration method with a step size of $dt = 1/25$.
The learning rate for the optimization was chosen at a constant $0.5$ for $300$ optimization steps.

For the optimization of the dihedral angles according to the ABEGO scheme, we utilized a differentiable implementation of Rosetta energy function in PyTorch, i.e. tmol \cite{tmol}.
We use a specific component of the full energy function that takes the dihedral angles of neighboring backbone elements as an input. We modified this component such that it increases the contribution for dihedral angles corresponding to $\beta$-sheets. 

In more detail, the combined Ramachandran/$\textrm{p\_aa\_pp}$ energy term in tmol interpolates the energies (the negative-log probabilities) stored in a toroidal-grid lookup-table over phi and psi dihedrals. To push the generated samples away from the alpha-helical region of conformation space, we created our own custom lookup table where we dramatically increased the energies for an elliptical region of the Ramachandran map that covers the alpha-helical region. The energies at the edge of this ellipse are not uniform, so our first crude assignment of energies inside this ellipse that sloped away from a central peak to a particular value at its edge frustrated the minimizer, trapping conformations in the weird minima at the edge of the ellipse. In our second attempt, we instead labeled each grid cell inside the ellipse by the closest cell outside of the ellipse and then interpolated the energies along the distance from the center of the ellipse (to which we assigned an energy of +20) to the ellipse's edge. This led to the desired behaviour and we therefore chose this approach. The resulting grid is shown in Figure~\ref{fig:rama_guidance_pot}.

\begin{figure}[htbp]
  \centering
\includegraphics[width=0.5\linewidth]{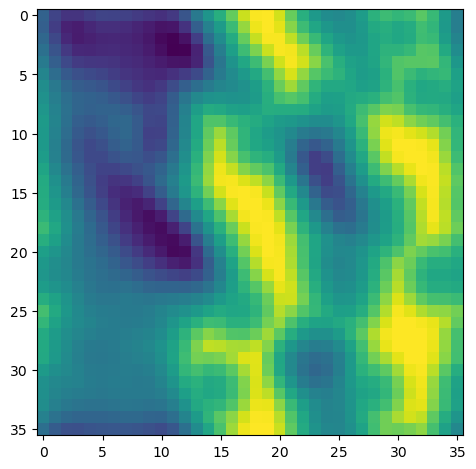}
  \caption{Grid used for Ramachandran guidance.}
  \label{fig:rama_guidance_pot}
\end{figure}

\begin{figure}[htbp]
  \centering
\includegraphics[width=0.9\linewidth]{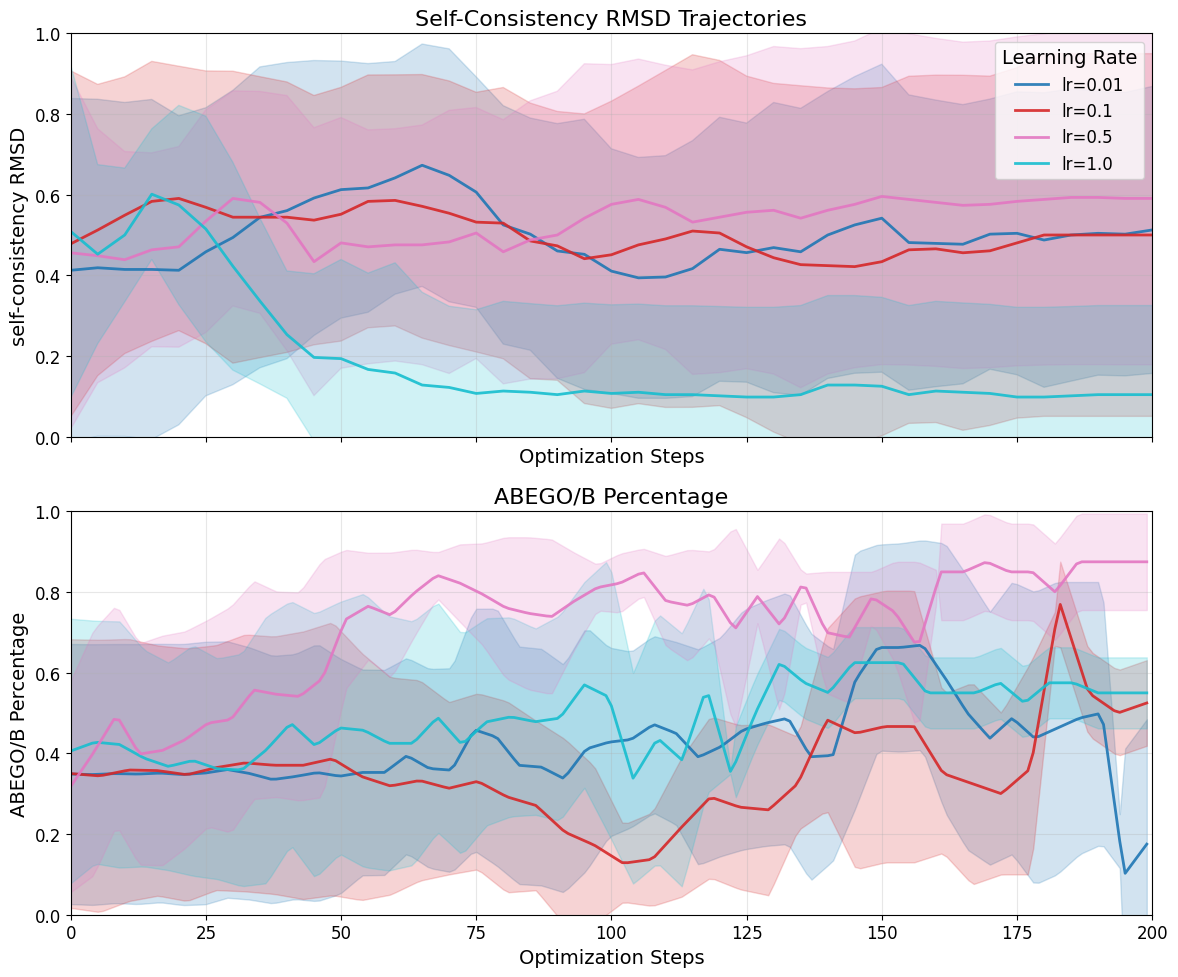}
  \caption{Self-consistent RMSD and the optimized ABEGO percentage over the course of diffeomorphic optimization.}
  \label{fig:scRMSDvsABEGO}
\end{figure}

Three important metrics in machine learning aided protein design are \emph{designability}, \emph{diversity} and \emph{novelty}.
We consider designability the most pertinent to our task as diversity and novelty are of lesser interest when trying to converge on a single optimal structure.
The optimization towards a single structure makes diversity collapse as expected. 
Similarly, since our aim is not to generate diverse samples but rather optimize towards a singular structure, novelty is a misplaced metric in this case.
Empirically, the designability of the optimization trajectories agree well with the designability of the initial values drawn from the generative model.
Only for larger learning rates of $1.0$ does the designability experience a significant drop while at the same time only marginally improving the percentage of dihedral angles classified as $B$ in the ABEGO scheme.

Since the FrameFlow model does not predict side chains, we interpreted all residues as alanine.
We applied the sampled translations and rotations to the idealized backbone atom14 positions to obtain the sampled backbone geometry.
The score function was minimized with gradient descent in the base space with a learning rate of $0.1$ for $500$ steps.

We implemented several versions of guidance to obtain maximally competitive baseline results.
Specifically, we implemented both a single step and and a multi step denoiser combined with the loss function described above. The guidance was started at $t=0.1$. We selected the starting point of the guidance signal by hyperparameter line search over $t=0.8, 0.1, 0.25, 0.5, 0.75$. We also found that it was benefical to stop sampling early at $t=0.98$ to prevent distortive numerical pathologies that arose due to the divisor of the optimal transport transport path close to $t=1$. The guidance vector field was obtained by evaluating the loss on the atom14 positions of each backbone residue and computing the gradient with respect to the corresponding element in the base space. For weighting the guidance term, we tested the theoretically motivated weighting function $b_t$ from \cite{lipman2024flow} but obtained better results with a constant, time-independent weight of $w=25$. This value was selected using a line-search hyperparameter sweep for $w=0.1, 1, 10, 25, 50, 100, 1000$.
For evaluating the guidance term on a sample $x_t$, we tested both taking the direct gradient of the loss function, $\partial{L}(x) |_{x = g(x_t)} / \partial x$, and backpropagating back through the denoiser $\partial{L}(g(x_t)) /  \partial x_t$. We chose the latter as it yielded better results. Experimentally, we also found that only using the translation term (and not the rotational part) as guidance improved the performance. This may be due to the fact that on $SO(3)$, there is no equivalence between score and denoising as in Euclidean space. We furthermore carefuly ablated the number of denoiser steps used in multistep denoising. This is because fair comparison to diffeomorphic optimization should allow for a generous budget of denoising steps as the method is numerically costly. As shown in Table~\ref{tab:denoising_abl}, even significant increase in the denoising steps does not lead to a pronounced gain in performance.

To demonstrate that the numerical cost of diffeomorphic gradient calculations can be reduced by coarse-grained backpropagation through the solver, we performed the following ablation study: we measured the performance of diffeomorphic optimization as a function of the number of steps used for sampling and calculating gradients during optimization.
To this end, we ran diffeomorphic optimization with $10, 25, 50, 75$ and $100$ integration steps during optimization, and evaluated the optimized sample after optimization with $1000$ integration steps.
The results are shown in Figure~\ref{fig:diffeo_int_steps}.
Compared to guidance, even a very small number of steps used for sampling during optimization yields good performance. For this experiment, we measured the performance over 10 seeds as compared to 50 seeds in the main part.

\begin{figure}[htbp]
  \centering
\includegraphics[width=0.75\linewidth]{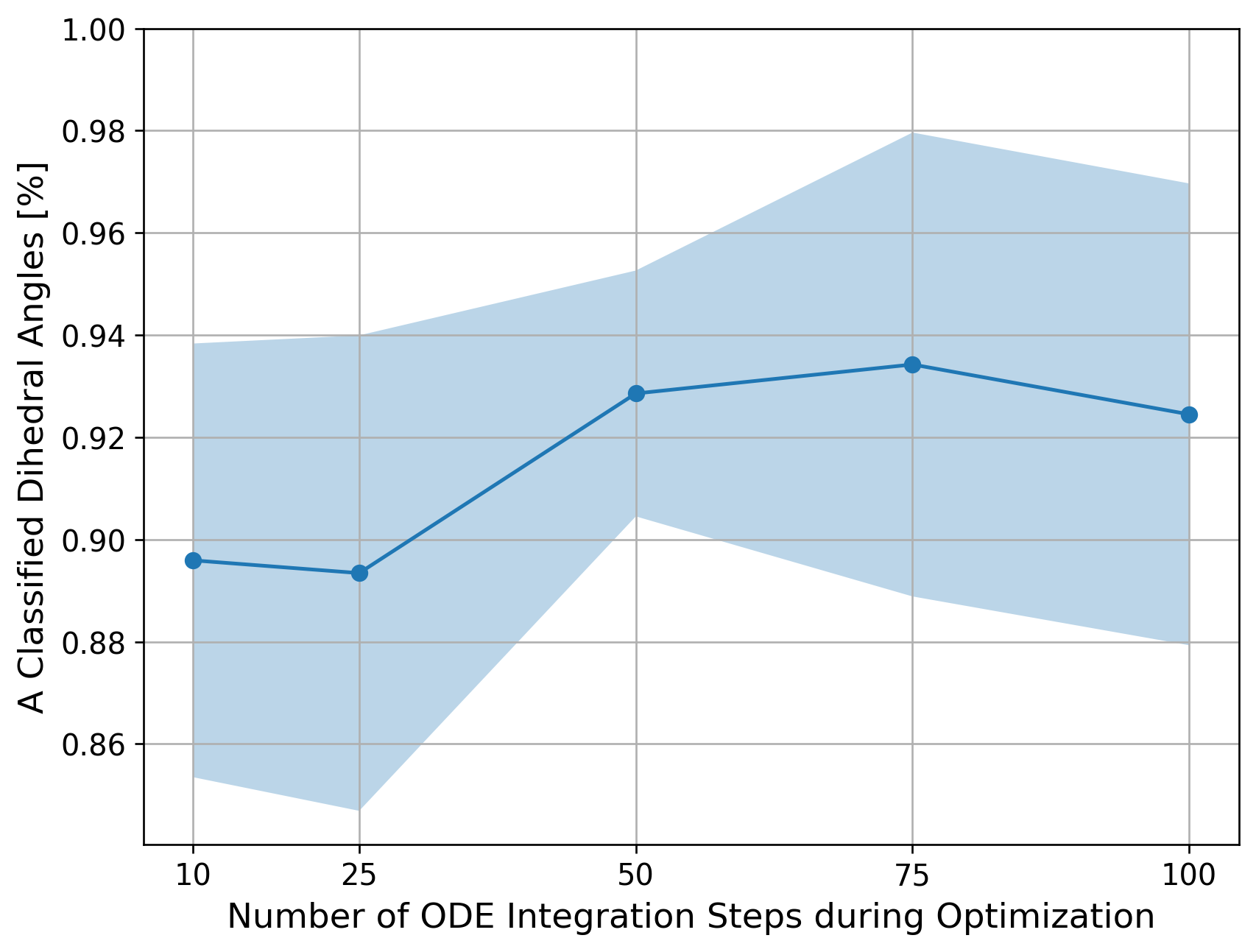}
  \caption{The performance of diffeomorphic optimization (higher is better) as a function of the number of ODE integration steps taken for each evaluation during optimization. After optimization, we evaluated all optimized samples with a $1000$ steps. Diffeomorphic optimization is robust even when using just 10 integration steps during optimization.}
  \label{fig:diffeo_int_steps}
\end{figure}

\begin{table}[h!]
\centering
\begin{tabular}{
  c
  c
  c
  S[table-format=2.1(2.1)]
  S[table-format=2.1(2.1)]
}
& & & \multicolumn{2}{c}{Secondary Structure} \\
& & & {$A$} & {$B$} \\
\midrule
Unguided & & & 53.5(23.3) & 32.8(20.1) \\
\midrule 
& ODE Steps & Denoiser Steps & & \\
\cmidrule(lr){2-5}
Guidance & 200 & 1 & 63.3(25.1) & 27.7(23.3) \\
&  & 50 & 63.5(24.8) & 27.4(23.1) \\
&  & 100 & 63.8(24.5) & 27.1(22.9) \\
&  & 200 & 64.1(24.1) & 26.8(22.6) \\
\cmidrule(lr){2-5}
& 500 & 1 & 64.2(24.4) & 26.6(22.4) \\
&  & 50 & 64.4(24.0) & 26.3(22.1) \\
&  & 100 & 64.7(23.7) & 26.0(21.9) \\
&  & 200 & 65.0(23.4) & 25.8(21.6) \\
\cmidrule(lr){2-5}
& 1000 & 1 & 65.0(23.6) & 25.5(21.5) \\
&  & 50 & 65.1(23.4) & 25.3(21.3) \\
&  & 100 & 65.2(23.2) & 25.1(21.0) \\
&  & 200 & 65.3(23.1) & 24.9(20.8) \\
\midrule
& ODE Steps & Optim Steps & & \\
\cmidrule(lr){2-5}
Diffeomorphic & 25 & 500 & \bfseries 91.3(8.4) & \bfseries 7.4(7.3) \\
\end{tabular}
\vspace{10pt}
\caption{The addition of more compute for guidance leads to diminishing returns. Increasing the the number of integration steps both in the flow sampling as well as in the denoiser does not automatically lead to a higher target metric as measured with the percentage of $A$ classified dihedral angles. All guidance terms were weighted with a constant $w=25$. Diffeomorphic optimization samples with $25$ integration steps and applies $500$ gradient descent steps in the base space. Diffeomorphic optimization yields higher $A$ classification scores with a substantially lower standard deviation in performance.}
\label{tab:denoising_abl}
\end{table}

\subsection{Docking Optimization}
\label{app:experiments_docking}


For docking optimization we used the DiffDock code base.
This diffusion model samples from the manifold of permissible rotational degrees of freedom of a small molecule on $SO(2)$ and its global translation and rotation on $SE(3)$.
In order to make the sampling process differentiable, we implemented the probability flow formulation of the diffusion models which provided a deterministic map between base space and target space.
For a differentiable loss, we considered two different metrics: the confidence head of the diffusion model and the OpenDock scoring function \cite{opendock}.

Opendock is a physics-inspired scoring function and implements the Vina Score in a differentiable manner in PyTorch. One of the technical challenges of incorporating OpenDock as a suitable loss function were posed by its reliance on PDBQT file formats.
In addition to the positional information stored in the PDB file format, the PDBQT file format additionally stores the partial charges, torsional bonds, and autodock atom types.
To work around this constraint, we first saved the ligand conformations generated by DiffDock to disk in the SDF file format, converted it to the PDBQT and reloaded it with the additionally generated partial charges created during the conversion.
At this point the computational graph connecting the base space variables to inputs of the differentiable loss scoring function was suspended due to the reading and writing to disk.
To reconnect the computational graph with the input of the Opendock scoring function, we substituted the atom positions in the loaded PDBQT data structure with the atom positions originating from the DiffDock model. 

The confidence head of the diffusion model is a neural network and was trained to predict whether the generated conformation is within $2$ Angstroms of the ground truth.
We additionally include on the scoring of OpenDock into our experimental setup which is a differentiable implementation of Vina.
Following the original authors recommendation on the confidence prediction, we screened for the entries in the publicly available PDBBind test data set that were assigned a confidence logit of larger than 0 by the DiffDock confidence head.
This resulted on a subset of roughly 78 PDBBind ligand-protein pairs on which the results were reported.
We perform 10 optimization steps with an Adam-style optimizer with a learning rate of $0.1$.
Experimentally, we observe that the gradients of the rotation and torsional degrees of freedom play a dominant role in the norm of the total gradient.
To counteract this, we introduced separate scalings for the learning rates of the rotation and torsions and multiply the learning rates for these two sets of degrees of freedom by an additional $0.1$.

To ensure fair comparison, we draw more samples for the sampling baseline in order to match the computational budget used by diffeomorphic optimization.
Since both checkpointing and the adjoint method recompute the activations during the backward pass through the ODE solver, we used a budget corrector of 3$\times$ a single sampling step.
This budget corrector number could be reduced more with coarser checkpointing and fewer steps, but we observed favorable experimental results even for this setting and thus did not explore this direction further.

\subsection{Tertiary Structure Optimization}
\label{app:experiments_tertiary}

We built on the alphaflow codebase\cite{alphaflow} and chose the ESMFold-based architecture as it does not require costly MSA. We ensured that diffeomorphic optimization works both for distilled and non-distilled case but large-scale testset-wide experiments were conducted with the latter as it leads to significant speed-up.
The model ingests the pairwise distances between random atom positions to predict the backbone and the side chain atom positions conditioned on the sequence information of the protein.
As a differentiable loss function, we used the tmol implementation of of $\textrm{beta\_nov2016\_cart}$ Rosetta score function which support pytorch autograd to compute gradients.
As the architecture contains non-differentiable distograms, we implemented a differentiable straight-through version thereof harnessing the sigmoid function $\sigma$,
\begin{align*}
    d_{soft} &= \sigma(\beta \cdot (d - lower)) \cdot (1- \sigma(\beta \cdot (d - upper)) \\
    d_{gram} &= \text{detach\_grad}(d) + (d_{soft} - \text{detach\_grad}(d_{soft}))
\end{align*}
where $lower$ and $upper$ denote the lower and upper bound of the discrete bin in which the distance $d$ should have been placed.
While the forward pass is unaffected by the soft distogram implementation, the gradients are affected by the smooth approxiation.
Higher values of the $\beta$ temperature result in a steeper sigmoid function at the cost of a stiffer gradient surface.

As a baseline we used the state-of-the-art Rosetta Relax protocol \cite{rosetta} without any diffeomorphic optimization.
For random sidechain repacking, Rosetta Relax uses $n$ seeds which are commonly referred to as structures in the Rosetta community. This is followed by gradient descent in which the repulsive terms in the energy function are adiabatically increased with a linear schedule. Packing and gradient-based minimization is then repeated $k$ times. 

In Figure~\ref{fig:rosettacompute} we evaluated the Rosetta Relax baseline with an increasing amount of compute measured as the product of the number of seeds $1, 3, 5$ with the number relaxation cycles $3,5$. We were not able to discern an improvement compared to diffeomorphic optimization in conjunction with Rosetta Relax.

For diffeomorphic optimization, we used Adam with a learning rate of $0.1$ and otherwise standard pytorch hyperparameters. This was then followed by standard Rosetta relax to reduce numerical cost. The performance of this protocol as a function of diffeomorphic optimization steps is shown in Figure~\ref{fig:rosetta2}.

In addition to the experiments reported in the main part of the paper, we also compared the performance of our diffeomorphic Rosetta Relax to a sampling-based baseline. For this baseline, we simply sample from AlphaFlow with the equivalent computational budget used by diffeomorphic optimization. We select the sample with the lowest energy which is then minimized by the same standard Rosetta relax as used in the last step of the diffeomorphic protocol. As shown in Figure~\ref{fig:rosetta_otherbaseline}, diffeomorphic optimization again outperforms this baseline.

\begin{figure}[htbp]
  \centering
\includegraphics[width=0.8\linewidth]{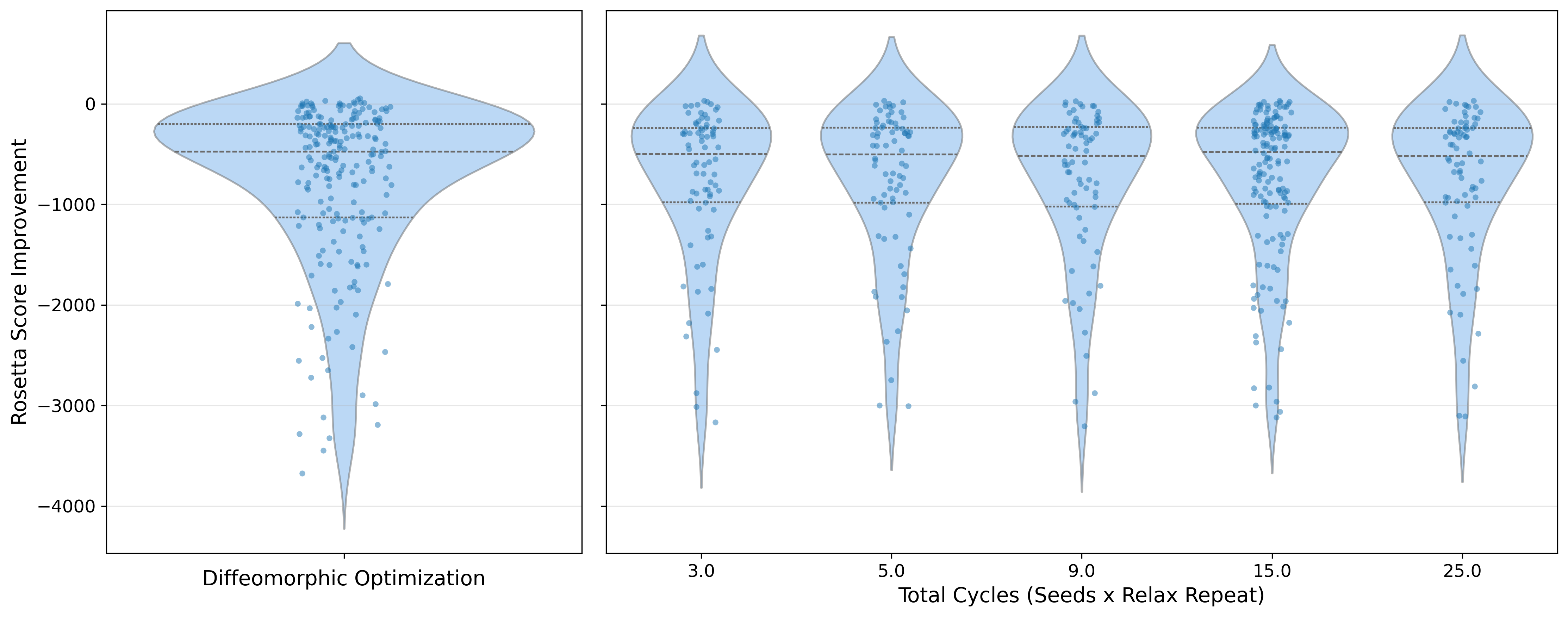}
  \caption{Diffeomorphic optimization of the Rosetta energy function: each point denotes a prediction of ESMFlow for an element of the pdb test set of Alphaflow for which we measure the improvement $E_{\textrm{diffeo}}-E_{\textrm{baseline}}$ of diffeomorphic Rosetta Relax over standard Rosetta relax. \textbf{Left:} Diffeomorphic Rosetta Relax outperforms Rosetta Relax and improves the Rosetta score with the default Rosetta Relax values of three structures and three repeated relaxations. \textbf{Right:} This improvement holds across an increasing computational budget as measured in terms of the total number of relaxations.}
  \label{fig:rosetta}
\end{figure}


\begin{figure}[!htbp]
  \centering
\includegraphics[width=\linewidth]{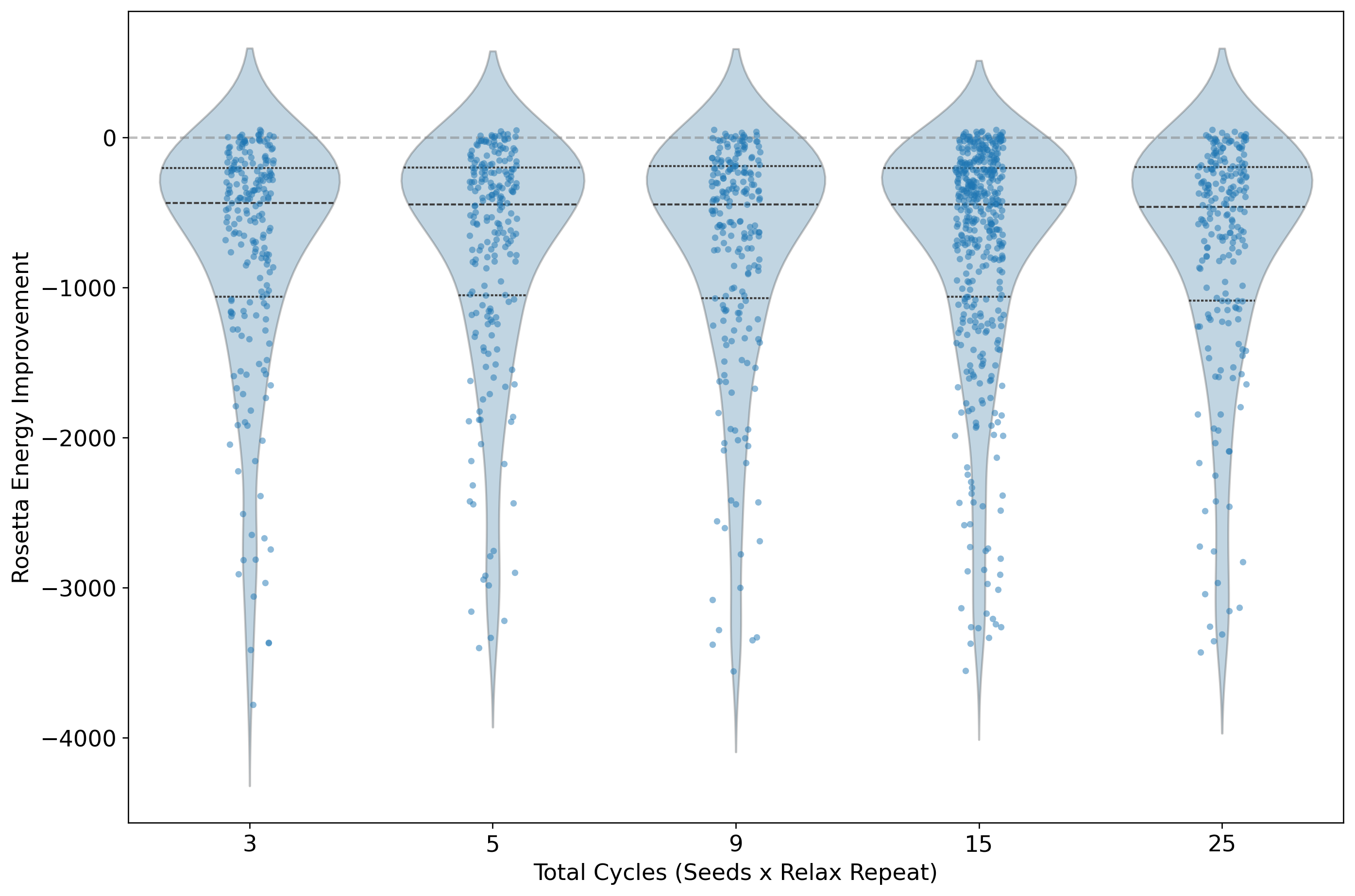}
  \caption{Diffeomorphic optimization in combination with Rosetta Relax yields improved (lower) Rosetta energies when directly compared to Rosetta Relax. This statement holds even with increasing computational budget for the baseline. In this experiment, we varied the number of Rosetta Relax random seeds (referred to as structures in the Rosetta community) and the number of relaxations for each random seed. The product of both Rosetta Relax random seeds and relaxations is plotted on the horizontal axis. We can observe no substantial benefits of providing Rosetta Relax with additional compute budget compared to diffeomorphic optimization which consistently improves the final Rosetta energy.}
  \label{fig:rosettacompute}
\end{figure}

\begin{figure}[!htbp]
  \centering
\includegraphics[width=\linewidth]{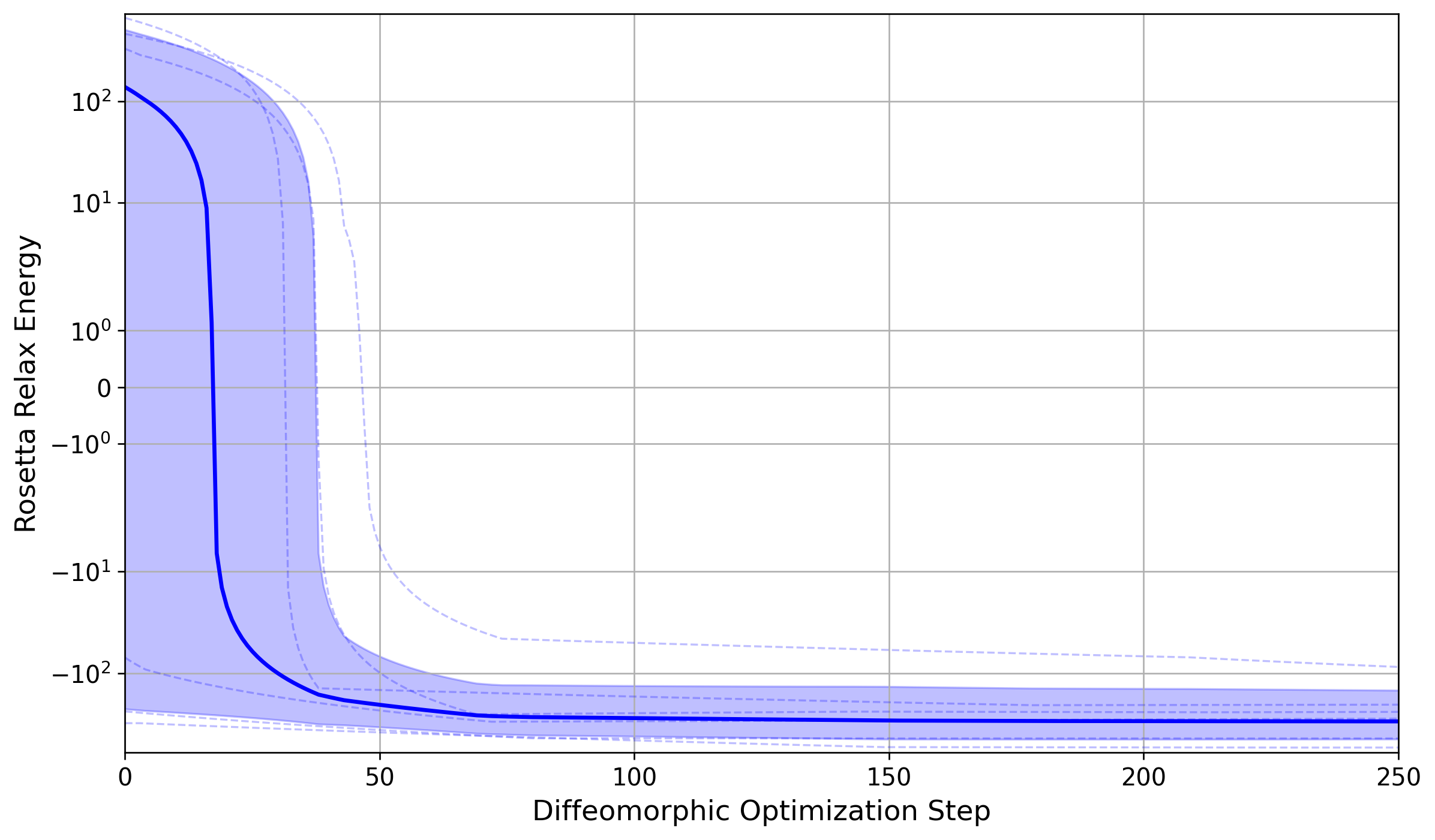}
  \caption{Diffeomorphic Optimization in combination with Rosetta Relax can achieve substantial energy reductions in a small number of steps. This is exemplified by the high energies at the beginning of the optimization curves which corresponds to running Rosetta Relax without a diffeomorphic optimization step which are subsequently is minimized by several orders of magnitudes while remaining stable.}
  \label{fig:rosetta2}
\end{figure}

\newpage

\end{document}